\newtheorem{theorem}{Theorem}[section]
\newtheorem{lemma}[theorem]{Lemma}
\newtheorem*{lemma*}{Lemma}
\newtheorem{noiselessResult}{Result}[section]
\begin{document}

\title{Clustering of Data with Missing Entries using Non-convex Fusion Penalties}

\author{Sunrita~Poddar,~\IEEEmembership{Student Member,~IEEE,}
               and~Mathews~Jacob,~\IEEEmembership{Senior~Member,~IEEE}
\IEEEcompsocitemizethanks{\IEEEcompsocthanksitem S. Poddar and M.  Jacob are with the Department
of Electrical and Computer Engineering, University of Iowa, Iowa City,
IA, 52246.\protect\\
}}

\IEEEtitleabstractindextext{
\begin{abstract}
The presence of missing entries in data often creates challenges for pattern recognition algorithms. Traditional algorithms for clustering data assume that all the feature values are known for every data point. We propose a method to cluster data in the presence of missing information. Unlike conventional clustering techniques where every feature is known for each point, our algorithm can handle cases where a few feature values are unknown for every point. For this more challenging problem, we provide theoretical guarantees for clustering using a $\l_0$ fusion penalty based optimization problem. Furthermore, we propose an algorithm to solve a relaxation of this problem using saturating non-convex fusion penalties. It is observed that this algorithm produces solutions that degrade gradually with an increase in the fraction of missing feature values. We demonstrate the utility of the proposed method using a simulated dataset, the Wine dataset and also an under-sampled cardiac MRI dataset. It is shown that the proposed method is a promising clustering technique for datasets with large fractions of missing entries.
\end{abstract}}

\maketitle

\IEEEdisplaynontitleabstractindextext

\IEEEpeerreviewmaketitle

\IEEEraisesectionheading{\section{Introduction}\label{sec:introduction}}

\IEEEPARstart{C}{lustering} is an exploratory data analysis technique that is widely used to discover natural groupings in large datasets,  where no labeled or pre-classified samples are available apriori. Specifically, it assigns an object to a group if it is similar to other objects within the group, while being dissimilar to objects in other groups. Example applications include analysis of gene experssion data, image segmentation, identification of lexemes in handwritten text,  search result grouping and recommender systems \cite{saxena2017review}. A wide variety of clustering methods have been introduced over the years; see \cite{jain1999data} for a review of classical methods. However, there is no consensus on a particular clustering technique that works well for all tasks, and there are pros and cons to most existing algorithms. The common clustering techniques such as k-means \cite{macqueen1967some}, k-medians \cite{bradley1997clustering} and spectral clustering \cite{ng2001spectral} are implemented using the Lloyd's algorithm which is non-convex and thus sensitive to initialization. Recently, linear programming and semi-definite programming based convex relaxations of the k-means and k-medians algorithms have been introduced \cite{awasthi2015relax} to overcome the dependence on initialization. Unlike the Lloyd's algorithm, these relaxations can provide a certificate of optimality. However, all of the above mentioned techniques require apriori knowledge of the desired number of clusters. Hierarchical clustering methods \cite{ward1963hierarchical}, which produce easily interpretable and visualizable clustering results for a varying number of clusters, have been introduced to overcome the above challenge. A drawback of \cite{ward1963hierarchical} is its sensitivity to initial guess and perturbations in the data. The more recent convex clustering technique (also known as sum-of-norms clustering) \cite{hocking2011clusterpath} retains the advantages of hierarchical clustering, while being invariant to initialization, and producing a unique clustering path. Theoretical guarantees for successful clustering using the convex-clustering technique are also available \cite{zhu2014convex}. 

Most of the above clustering algorithms cannot be directly applied to real-life datasets, where a large fraction of samples are missing. For example, gene expression data often contains missing entries due to image corruption, fabrication errors or contaminants \cite{de2015impact}, rendering gene cluster analysis difficult. Likewise, large databases used by recommender systems (e.g Netflix) usually have a huge amount of missing data, which makes pattern discovery challenging \cite{bell2008bellkor}. The presence of missing responses in surveys \cite{brick1996handling} and failing imaging sensors in astronomy \cite{wagstaff2005making} are reported to make the analysis in these applications challenging. Several approaches were introduced to extend clustering to missing-data applications. For example, a partially observed dataset can be converted to a fully observed one using either deletion or imputation \cite{dixon1979pattern}. Deletion involves removal of variables with missing entries, while imputation tries to estimate the missing values and then performs clustering on the completed dataset. An extension of the weighted sum-of-norms algorithm (originally introduced for fully sampled data \cite{hocking2011clusterpath}) has been proposed where  the weights are estimated from the data points by using some imputation techniques on the missing entries \cite{chen2015convex}. Kernel-based methods for clustering have also been extended to deal with missing entries by replacing Euclidean distances with  partial distances \cite{hathaway2001fuzzy,sarkar2001fuzzy}. 
A majorize minimize algorithm was introduced to solve for the cluster-centres and cluster memberships in \cite{chi2016k}, which offers proven reduction in cost with iteration. In \cite{hunt2003mixture} and \cite{lin2006fast} the data points are assumed to lie on a mixture of $K$ distributions, where $K$ is known. The algorithms then alternate between the maximum likelihood estimation of the distribution parameters and the missing entries. A challenge with these algorithms is the lack of theoretical guarantees for successful clustering in the presence of missing entries. In contrast, there has been a lot of work in recent years on matrix completion for different data models. Algorithms along with theoretical guarantees have been proposed for low-rank matrix completion \cite{candes2009exact} and subspace clustering from data with missing entries \cite{DBLP:journals/corr/abs-1112-5629}, \cite{elhamifar2016high}. However, these  algorithms and their theoretical guarantees cannot be trivially extended to  the problem of clustering in the presence of missing entries. 

The main focus of this paper is to introduce an algorithm for the clustering of data with missing entries and to theoretically analyze the conditions needed for perfect clustering in the presence of missing data. The proposed algorithm is inspired by the sum-of-norms clustering technique \cite{hocking2011clusterpath}; it is formulated as an optimization problem, where an auxiliary variable assigned to each data point is an estimate of the centre of the cluster to which that point belongs. A fusion penalty is used to enforce equality between many of these auxiliary variables. Since we have experimentally observed that non-convex fusion penalties provide superior clustering performance, we focus on the analysis of clustering using a $\ell_0$ fusion penalty in the presence of missing entries, for an arbitrary number of clusters. The analysis reveals that perfect clustering is guaranteed with high probability, provided the number of measured entries (probability of sampling) is high enough; the required number of measured entries depends on several parameters including intra-cluster variance and inter-cluster distance. We observe that the required number of entries is critically dependent on coherence, which is a measure of the concentration of inter cluster differences in the feature space. Specifically,  if the clustering of the points is determined only by a very small subset of all the available features, then the clustering becomes quite unstable if those particular feature values are unknown for some points. Other factors which influence the clustering technique are the number of features, number of clusters and total number of points. We also extend the theoretical analysis to the case without missing entries. The analysis in this setting shows improved bounds when a uniform random distribution of points in their respective clusters is considered, compared to the worst case analysis considered in the missing-data setting. We expect that improved bounds can also be derived for the case with missing data when a uniform random distribution is considered.

We also propose an algorithm to solve a relaxation of the above $\ell_0$ penalty based clustering problem, using non-convex saturating fusion penalties. The algorithm is demonstrated on a simulated dataset with different fractions of missing entries and cluster separations. We observe that the algorithm is stable with changing fractions of missing entries, and the clustering performance degrades gradually with an increase in the number of missing entries. We also demonstrate the algorithm on clustering of the Wine dataset \cite{Lichman:2013} and reconstruction of a dynamic cardiac MRI dataset from few Fourier measurements.

\section{Clustering using $\ell_0$ fusion penalty}

\subsection{Background}
We consider the clustering of points drawn from one of $K$ distinct clusters $C_1, C_2, \ldots, C_K$. We denote the center of the clusters by $\mathbf c_1, \mathbf c_2, \ldots, \mathbf c_K \in \mathbb R^P$. For simplicity, we assume that there are $M$ points in each of the clusters. The individual points in the $k^{\rm th}$ cluster are modelled as:
\begin{equation}
\label{noisemodel}
\mathbf z_k(m) = \mathbf c_k + \mathbf n_{k}(m); ~~m=1,..,M, ~k=1,\ldots,K
\end{equation}
Here, $\mathbf n_{k}(m)$ is the noise or the variation of $\mathbf z_k(m)$ from the cluster center $\mathbf c_k$. The set of input points $\{\mathbf x_i\},i=1,..,KM$ is obtained as a random permutation of the points $\{\mathbf z_k(m)\}$. The objective of a clustering algorithm is to estimate the cluster labels, denoted by $\mathcal C(\mathbf x_i)$ for $i = 1,..,KM$. 

The sum-of-norms (SON) method is a recently proposed convex clustering algorithm \cite{hocking2011clusterpath}. Here, a surrogate variable $\mathbf u_i$ is introduced for each point $\mathbf x_i$, which is an estimate of the centre of the cluster to which $\mathbf x_i$ belongs. As an example, let $K=2$ and $M=5$. Without loss of generality, let us assume that $\mathbf x_1, \mathbf x_2, \ldots, \mathbf x_5$ belong to $\mathcal C_1$ and  $\mathbf x_6, \mathbf x_7, \ldots, \mathbf x_{10}$ belong to $\mathcal C_2$. Then, we expect to arrive at the solution: $\mathbf u_1= \mathbf u_2 = \ldots = \mathbf u_5= \mathbf c_1$ and $\mathbf u_6= \mathbf u_7 = \ldots = \mathbf u_{10} = \mathbf c_2$. In order to find the optimal $\{\mathbf u_i^*\}$, the following optimization problem is solved:

\begin{equation}
\label{SON}
\{\mathbf u_i^*\} = \arg \min_{\{\mathbf u_i\}} \sum_{i=1}^{KM}\|\mathbf x_i - \mathbf u_i\|_2^2 + \lambda \sum_{i=1}^{KM} \sum_{j=1}^{KM} \|\mathbf u_i - \mathbf u_j\|_{p}
\end{equation}
The fusion penalty ($\|\mathbf u_i - \mathbf u_j\|_{p}$) can be enforced using different $\ell_p$ norms, out of which the $\ell_1$, $\ell_2$ and $\ell_\infty$ norms have been used in literature \cite{hocking2011clusterpath}. The use of sparsity promoting fusion penalties encourages sparse differences $\mathbf u_i-\mathbf u_j$, which facilitates the clustering of the points $\{\mathbf u_i\}$. For an appropriately chosen $\lambda$, the $\mathbf u_i$'s corresponding to $\mathbf x_i$'s from the same cluster converge to the same point. The main benefit of this convex scheme over classical clustering algorithms is the convergence of the algorithm to the global minimum.

The above optimization problem can be solved efficiently using the Alternating Direction Method of Multipliers (ADMM) algorithm and the Alternating Minimization Algorithm (AMA)  \cite{chi2015splitting}. Truncated $\ell_1$ and $\ell_2$ norms have also been used recently in the fusion penalty, resulting in non-convex optimization problems \cite{pan2013cluster}. It has been shown that these penalties provide superior performance to the traditional convex penalties. Convergence to local minimum using an iterative algorithm has also been guaranteed in the non-convex setting.

The sum-of-norms algorithm has also been used as a visualization and exploratory tool to discover patterns in datasets \cite{chen2015convex}. Clusterpath diagrams are a common way to visualize the data. This involves plotting the solution path as a function of the regularization parameter $\lambda$. For a very small value of $\lambda$, the solution is given by: $\mathbf u_i^* = \mathbf x_i$, i.e. each point forms its individual cluster. For a very large value of $\lambda$, the solution is given by: $\mathbf u_i^* = c$, i.e. every point belongs to the same cluster. For intermediate values of $\lambda$, more interesting behaviour is seen as various $\{\mathbf u_i\}$ merge and reveal the cluster structure of the data. 

In this paper, we extend the algorithm to account for missing entries in the data. We present theoretical guarantees for clustering with and without missing entries using an $\ell_0$ fusion penalty. Next, we approximate the $\ell_0$ penalty by non-convex saturating penalties, and solve the resulting relaxed optimization problem using an iterative reweighted least squares (IRLS) strategy \cite{chartrand2008iteratively}. The proposed algorithm is shown to perform clustering correctly in the presence of large fractions of missing entries. 

\begin{figure}[!t]
	\centering
	\center{\includegraphics[width=0.5\textwidth]{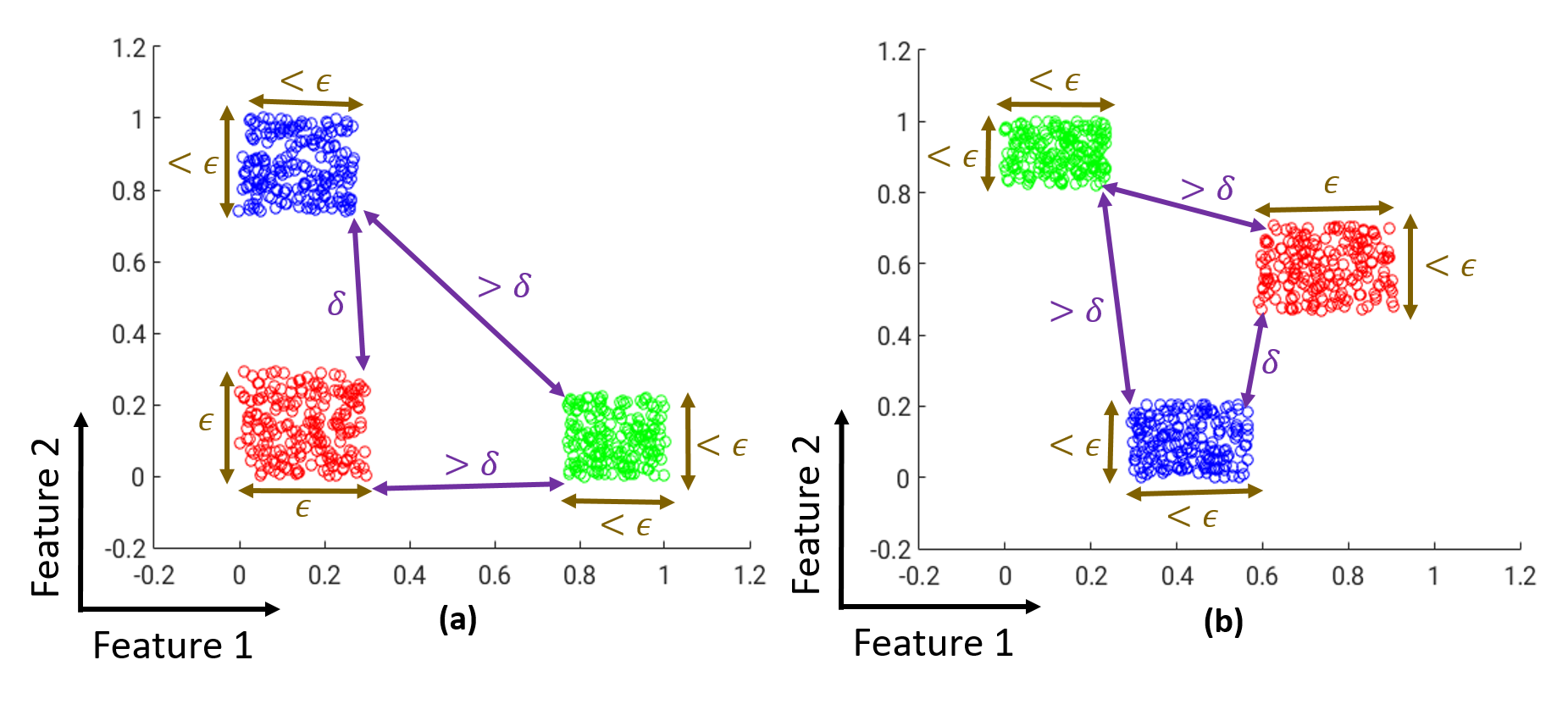}}
	\caption{Central Assumptions: (a) and (b) illustrate different  instances where points belonging to $\mathbb{R}^2$ are to be separated into 3 different clusters (denoted by the colours red, green and blue). Assumptions A.1 and A.2 related to cluster separation and cluster size respectively, are illustrated in both (a) and (b). The importance of assumption A.3 related to feature concentration can also be appreciated by comparing (a) and (b). In (a), points in the red and blue clusters cannot be distinguished solely on the basis of feature 1, while the red and green clusters cannot be distinguished solely on the basis of feature 2. Thus, it is difficult to correctly  cluster these points if either of the feature values is unknown. In (b), due to low coherence (as assumed in A.3), this problem does not arise.}
	\label{FigAssump}
\end{figure}

\subsection{Central Assumptions}
We make the following assumptions (illustrated in Fig \ref{FigAssump}), which are key to the successful clustering of the points: 
\begin{description}
\item [A.1:] \textbf{Cluster separation:} Points from different clusters are separated by $\delta >0$ in the $\ell_2$ sense, i.e:
\begin{equation}
\label{deltadef}
\min_{\{m,n\}}\|\mathbf z_{k}(m) -\mathbf z_{l}(n)\|_{2} \geq \delta; ~\forall\; k\neq l
\end{equation}
We require $\delta >0$ for the clusters to be non-overlapping. A high $\delta$ corresponds to well separated clusters.
\item [A.2:] \textbf{Cluster size:} The maximum separation of points within any cluster in the $\ell_{\infty}$ sense is $\epsilon \geq 0$, i.e:
\begin{equation}
\label{epsdef}
\max_{\{m,n\}}\|\mathbf z_{k}(m) - \mathbf z_{k}(n)\|_{\infty} = \epsilon; ~\forall k=1,\ldots,K
\end{equation}
Thus, the $k^{\rm th}$ cluster is contained within a cube of size $\epsilon$, with center $\mathbf c_k$.
\item [A.3:] \textbf{Feature concentration:} The coherence of a vector $\mathbf y \in \mathbb R^P$ is defined as \cite{candes2009exact}:
\begin{equation}
\mu(\mathbf y) = \frac{P\|\mathbf y\|_{\infty}^2}{\|\mathbf y\|_2^2}
\end{equation}
By definition: $1 \leq \mu(\mathbf y) \leq P$. Intuitively, a vector with a high coherence has a few large values and several small ones. Specifically, if $\mu(\mathbf y) = P$, then $\mathbf y$ has only $1$ non-zero value. In contrast, if $\mu(\mathbf y) = 1$, then all the entries of $\mathbf y$ are equal. We bound the coherence of the difference between points from different clusters as:
\begin{equation}
\label{coherence}
\max_{\{m,n\}}\mu(\mathbf z_k(m) - \mathbf z_l(n)) \leq \mu_0; ~\forall\; k\neq l
\end{equation}
$\mu_0$ is indicative of the difficulty of the clustering problem in the presence of missing data. If $\mu_0=P$, then two clusters differ only a single feature,  suggesting that it is difficult to assign the correct cluster to a point if this feature is not sampled. The best case scenario is $\mu_0=1$, when all the features are equally important. In general, cluster recovery from missing data becomes challenging with increasing $\mu_0$.
\end{description}
The quantity $\kappa = \frac{\epsilon\sqrt P}{\delta}$ is a measure of the difficulty of the clustering problem. Small values of $\kappa$ suggest large inter-cluster separation compared to the cluster size; the recovery of such well-defined clusters is expected to be easier than the case with large $\kappa$ values. Note the $\ell_2$ norm is used in the definition of $\delta$, while the $\ell_{\infty}$ norm is used to define $\epsilon$. If $\delta = \epsilon\sqrt P$, then $\kappa = 1$; this value of $\kappa$ is of special importance since $\kappa <  1$ is a requirement for successful recovery in our main results. 

We study the problem of clustering the points $\{\mathbf x_i\}$ in the presence of entries missing uniformly at random. We arrange the points $\{\mathbf x_i\}$ as columns of a matrix $\mathbf X$. The rows of the matrix are referred to as features. We assume that each entry of $\mathbf X$ is observed with probability $p_0$. The entries measured in the $i^{th}$ column are denoted by:
\begin{equation}
\mathbf y_i = \mathbf S_i\, \mathbf x_i, ~~ i=1,..,KM
\end{equation}
where $\mathbf S_i$ is the sampling matrix, formed by selecting rows of the identity matrix. 
We consider solving the following optimization problem to obtain the cluster memberships from data with missing entries:
\begin{equation}
\label{l0prob}
\begin{split}
\{\mathbf u_i^{*}\} = & \min_{\{\mathbf u_i\}} \sum_{i=1}^{KM}\sum_{j=1}^{KM}\|\mathbf u_i - \mathbf u_j\|_{2,0}\\ & \mbox{ s.t } \|\mathbf S_i\;(\mathbf x_i - \mathbf u_i)\|_\infty \leq {\frac{\epsilon}{2}}, i\in\{1 \ldots KM\}
\end{split}
\end{equation}
We use the above constrained formulation rather than the unconstrained formulation in \eqref{SON} to avoid the dependence on $\lambda$. The $\ell_{2,0}$ norm is defined as:
\begin{equation}
\|\mathbf x\|_{2,0} = \begin{cases}
0 &, \text{if $\|\mathbf x\|_2 = 0$}\\
1 &, \text{otherwise}
\end{cases}
\end{equation}
Similar to the SON scheme \eqref{SON}, we expect that all $\mathbf u_i$'s that correspond to $\mathbf x_i$ in the same cluster are equal, while $\mathbf u_i$'s from different clusters are not equal. We consider the cluster recovery to be successful when there are no mis-classifications. We claim that the above algorithm can successfully recover the clusters with high probability when: 
\begin{enumerate}
\item  The clusters are well separated (i.e, low $\kappa = \frac{\epsilon\sqrt{P}}{\delta})$).
\item  The sampling probability $p_0$ is sufficiently high.
\item  The coherence $\mu_0$ is small.
\end{enumerate}

Before moving on to a formal statement and proof of this result, we consider a simple special case to illustrate the approach. In order to aid the reader in following the results, all the important symbols used in the paper have been summarized in Table \ref{tab:notUsed}.

\begin{table}[htbp]\caption{Notations used}
\begin{center}
\begin{tabular}{r c p{5.7cm} }
\toprule
&$K$ & Number of clusters\\
& $M$ & Number of points in each cluster\\
& $P$ & Number of features for each point\\
&$\mathcal C_i$ & The $i^{th}$ cluster\\
& $\mathbf c_i$ & Centre of $\mathcal C_i$\\
& $\mathbf z_i(m)$ & $m^{th}$ point in $\mathcal C_i$\\
& $\{\mathbf x_i\}$ & Random permutation of $KM$ points $\{\mathbf z_k(m)\}$ for $k \in \{1,2,\ldots, K\}, m \in \{1,2,\ldots, M\}$\\
& $\mathbf S_i$ & Sampling matrix for $\mathbf x_i$\\
& $\mathbf X$ & Matrix formed by arranging $\{\mathbf x_i\}$ as columns, such that the $i^{th}$ column is $\mathbf x_i$\\
& $p_0$ & Probability of sampling each entry in $\mathbf X$\\
& $\delta$ & Parameter related to cluster separation defined in \eqref{deltadef}\\
& $\epsilon$ & Parameter related to cluster size defined in \eqref{epsdef}\\
& $\kappa$ & Defined as $\kappa = \frac{\epsilon \sqrt{P}}{\delta}$\\
& $\mu_0$ & Parameter related to coherence defined in \eqref{coherence}\\
& $\gamma_0$ & Defined in \eqref{gamma0}\\
& $\delta_0$ & Defined in \eqref{delta0}\\
& $\beta_0$ & Defined in \eqref{beta0}\\
& $\eta_0$ & Defined in \eqref{eta0}\\
& $\eta_{0,{\rm approx}}$ & Upper bound for $\eta_0$ for the case of 2 clusters, defined in \eqref{etaApprox}\\
& $c$ & Parameter related to cluster centre separation defined in \eqref{cDef}\\
& $\kappa'$ & Defined as $\kappa' = \frac{\epsilon \sqrt{P}}{c}$\\
& $\beta_1$ & Defined in \eqref{beta1}\\
& $\eta_1$ & Probability of failure of Theorem \ref{noMissingUnifFinal}\\
\bottomrule
\end{tabular}
\end{center}
\label{tab:notUsed}
\end{table}

\subsection{Noiseless Clusters with Missing Entries}

We consider the simple case where all the points belonging to the same cluster are identical. Thus every cluster is "noiseless", and we have: $\epsilon=0$ and hence $\kappa = 0$. The optimization problem \eqref{l0prob} now reduces to:

\begin{equation}
\label{l0probNoiseless}
\begin{split}
\{\mathbf u_i^{*}\} = & \min_{\{\mathbf u_i\}} \sum_{i=1}^{KM}\sum_{j=1}^{KM}\|\mathbf u_i - \mathbf u_j\|_{2,0}\\ & \mbox{ s.t } \mathbf S_i\,\mathbf x_i = \mathbf S_i\, \mathbf u_i, i\in\{1 \ldots KM\}
\end{split}
\end{equation}
Next, we state a few results for this special case in order to provide some intuition about the problem. The results are not stated with mathematical rigour and are not accompanied by proofs. In the next sub-section, when we consider the general case, we will provide lemmas and theorems (with proofs in the appendix), which generalize the results stated here. Specifically, Lemmas \ref{cent}, \ref{partDist}, \ref{smallClusters} and Theorem \ref{mainResult} generalize Results \ref{noiselessLemmA.0}, \ref{noiselessLemmA.1}, \ref{noiselessLemmA.2} and \ref{noiselessMain} respectively.

We will first consider the data consistency constraint in \eqref{l0probNoiseless} and determine possible feasible solutions. We observe that all the points in any specified cluster can share a centre without violating the data consistency constraint: 
\begin{noiselessResult}
\label{noiselessLemmA.0}
Consider any two points $\mathbf x_1 $ and $\mathbf x_2 $ from the same cluster. A solution $\mathbf u$ exists for the following equations:
	\begin{eqnarray}
\mathbf S_i\,\mathbf x_i &=& \mathbf S_i\, \mathbf u; ~~i=1,2
\end{eqnarray}
with probability $1$.
\end{noiselessResult}
The proof for the above result is trivial in this special case, since all points in the same cluster are the same. We now consider two points from different clusters. 

\begin{noiselessResult}
\label{noiselessLemmA.1}
Consider two points $\mathbf x_1 $ and $\mathbf x_2 $ from different clusters. A solution $\mathbf u$ exists for the following equations:
\begin{equation}
\label{common}
\mathbf S_i\,\mathbf x_i = \mathbf S_i\, \mathbf u; ~~i=1,2
\end{equation}
with low probability, when the sampling probability $p_0$ is high and coherence $\mu_0$ is low.	
\end{noiselessResult} 
By definition, $\mathbf S_1 = \mathbf S_{\mathcal I_1}$ and $\mathbf S_2 = \mathbf S_{\mathcal I_2}$, where $\mathcal I_1$ and $\mathcal I_2$ are the index sets of the features that are sampled (not missing) in $\mathbf x_1$ and $\mathbf x_2$ respectively. We observe that \eqref{common} can be satisfied, iff:
\begin{equation}
\label{commonsamples}
\mathbf S_{\mathcal I_1 \cap \mathcal I_2} (\mathbf x_1 - \mathbf x_2) = \mathbf 0
\end{equation}
which implies that the features of $\mathbf x_1$ and $\mathbf x_2$ are the same on the index set $\mathcal I_1 \cap \mathcal I_2$. If the probability of sampling $p_0$ is sufficiently high, then the number of samples at commonly observed locations:
\begin{equation}
|\mathcal I_1 \cap \mathcal I_2| = q
\end{equation}
will be high, with high probability. If the coherence $\mu_0$ defined in assumption A3 is low, then with high probability the vector $\mathbf x_1 - \mathbf x_2$ does not have $q$ entries that are equal to 0. In other words, the cluster memberships are not determined by only a few features. Thus, for a small value of $\mu_0$ and high $p_0$, we can ensure that \eqref{commonsamples} occurs with very low probability. We now generalize the above result to obtain the following:
\begin{noiselessResult}
\label{noiselessLemmA.2}
Assume that $\{\mathbf x_i: i\in \mathcal I, |\mathcal I|= M\}$ is a set of points chosen randomly from multiple clusters (not all are from the same cluster). A solution $\mathbf u$ exists for the following equations:
\begin{equation}
\mathbf S_i\,\mathbf x_i  = \mathbf S_i\, \mathbf u; ~\forall i \in \mathcal I
\end{equation}
with low probability, when the sampling probability $p_0$ is high and coherence $\mu_0$ is low.
\end{noiselessResult}
The key message of the above result is that large mis-classified clusters are highly unlikely. We will show that all feasible solutions containing small mis-classified clusters are associated with higher cost than the correct solution. Thus, we can conclude that the algorithm recovers the ground truth solution with high probability, as summarized by the following result.
\begin{noiselessResult}
\label{noiselessMain}
The optimization problem \eqref{l0probNoiseless} results in the ground-truth clustering with a high probability if the sampling probability $p_0$ is high and the coherence $\mu_0$ is low.
\end{noiselessResult}

\subsection{Noisy Clusters with Missing Entries}
\label{generalcase}
We will now consider the general case of noisy clusters with missing entries, and will determine the conditions required for \eqref{l0prob} to yield successful recovery of clusters. The reasoning behind the proof in the general case is similar to that for the special case discussed in the previous sub-section. Before proceeding to the statement of the lemmas and theorems, we define the following quantities:

\begin{itemize}
\item Upper bound for probability that two points have less than $\frac{p_0^2P}{2}$ commonly observed locations:
\begin{equation}
\label{gamma0}
\gamma_0 \coloneqq (\frac{e}{2})^{-\frac{p_0^2P}{2}}
\end{equation}

\item Given that two points from different clusters have more than $\frac{p_0^2P}{2}$ commonly observed locations, upper bound for probability that they can yield the same $\mathbf u$ without violating the constraints in \eqref{l0prob}:
\begin{equation}
\label{delta0}
\delta_0 \coloneqq e^{-\frac{p_0^2P(1-\kappa^2)^2}{\mu_0^2}}
\end{equation}

\item Upper bound for probability that two points from different clusters can yield the same $\mathbf u$ without violating the constraints in \eqref{l0prob}:
\begin{equation}
\label{beta0}
\beta_0 \coloneqq 1-(1-\delta_0)(1-\gamma_0)
\end{equation}

\item  Upper bound for failure probability of \eqref{l0prob}:
\begin{equation}
\label{eta0}
\eta_0 \coloneqq \sum_{\{m_j\} \in \mathcal S}\left[\beta_0^{\frac{1}{2}(M^2-\sum_j {m_j^2})} \prod_j {M \choose m_j}\right]
\end{equation}
where $\mathcal S$ is the set of all sets of positive integers $\{m_j\}$ such that: $2 \leq \mathcal U(\{m_j\}) \leq K$ and $\sum_j m_j = M$. Here, the function $\mathcal U$ counts the number of non-zero elements in a set. For example, if $K=2$ then $\mathcal S$ contains all sets of $2$ positive integers $\{m_1, m_2\}$, such that $m_1 + m_2 = M$. Thus, $\mathcal S = \{\{1,M-1\}, \{2,M-2\}, \{3,M-3\}, \ldots, \{M-1,1\}\}$ and \eqref{eta0} reduces to:
\begin{equation}
\eta_0 =  \sum_{i=1}^{M-1}\left[\beta_0^{i(M-i)} {M \choose i}^2\right]
\end{equation}
\item We note that the expression for $\eta_0$ is quite involved. Hence, to provide some intuition, we simplify this expression for the special case where there are only two clusters. 
Under the assumption that $\log \beta_0 \leq \frac{1}{M-1} + \frac{2}{M-2}\log \frac{1}{M-1}$, it can be shown that $\eta_0$ is upper-bounded as:
\begin{equation}
\label{etaApprox}
\begin{split}
\eta_0 & = \sum_{i=1}^{M-1}\left[\beta_0^{i(M-i)} {M \choose i}^2\right]\\ &\leq M^3 \beta_0^{M-1} \\ &\coloneqq \eta_{0,{\rm approx}}
\end{split}
\end{equation}
The above upper bound is derived in Appendix \ref{appUB}.

\end{itemize}
We now state the results for clustering with missing entries in the general noisy case. The following two lemmas are generalizations of Results \ref{noiselessLemmA.0} and \ref{noiselessLemmA.1} to the noisy case. 

\begin{lemma}
\label{cent}
Consider any two points $\mathbf x_1 $ and $\mathbf x_2 $ from the same cluster. A solution $\mathbf u$ exists for the following equations:
	\begin{eqnarray}
\label{samecluster}
\|\mathbf S_i\,(\mathbf x_i-\mathbf u)\|_{\infty} &\leq& {\frac{\epsilon}{2}}; ~ ~i=1,2
\end{eqnarray}
with probability $1$.
\end{lemma}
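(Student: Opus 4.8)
The plan is to exhibit an explicit feasible $\mathbf u$ rather than argue existence abstractly, and the natural choice is the common cluster centre. Since $\mathbf x_1$ and $\mathbf x_2$ are drawn from the same cluster, say $\mathcal C_k$, I would write $\mathbf x_1 = \mathbf z_k(m_1)$ and $\mathbf x_2 = \mathbf z_k(m_2)$ for some indices $m_1, m_2$, and set $\mathbf u = \mathbf c_k$. The whole argument then reduces to checking that this single vector simultaneously satisfies both inequalities, for every realisation of the random sampling matrices $\mathbf S_1$ and $\mathbf S_2$.

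The key step I would carry out first is to convert the diameter bound of Assumption A.2 into a radius bound about the centre. Assumption A.2 controls $\max_{m,n}\|\mathbf z_k(m)-\mathbf z_k(n)\|_{\infty} = \epsilon$, i.e. the spread of the cluster along each feature coordinate is at most $\epsilon$. Taking $\mathbf c_k$ to be the centre of the axis-aligned bounding box of the cluster (the per-coordinate midpoint of the smallest and largest feature values), each coordinate of $\mathbf z_k(m) - \mathbf c_k$ has magnitude at most $\epsilon/2$, so that $\|\mathbf z_k(m) - \mathbf c_k\|_{\infty} \le \epsilon/2$ for every point in $\mathcal C_k$. In particular this holds for both $\mathbf x_1$ and $\mathbf x_2$.

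The second step is to observe that sampling can only shrink the $\ell_\infty$ norm: since $\mathbf S_i$ is built from rows of the identity, $\mathbf S_i(\mathbf x_i - \mathbf c_k)$ is a subvector of $\mathbf x_i - \mathbf c_k$, whence $\|\mathbf S_i(\mathbf x_i - \mathbf c_k)\|_{\infty} \le \|\mathbf x_i - \mathbf c_k\|_{\infty} \le \epsilon/2$ for $i=1,2$. This verifies feasibility of $\mathbf u = \mathbf c_k$ and completes the argument.

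There is no genuinely hard step here --- this is the noisy analogue of Result \ref{noiselessLemmA.0}, where identical points let one pick $\mathbf u$ equal to the point itself; in the noisy case the centre plays that role. The only points worth flagging are (i) that the claimed probability is $1$ precisely because the construction is deterministic and the two inequalities hold for \emph{every} choice of $\mathbf S_1, \mathbf S_2$, so no event about the sampling need be estimated, and (ii) that one cannot simply take $\mathbf u = \mathbf x_1$: that satisfies the first constraint trivially but can violate the second, since $\|\mathbf x_2 - \mathbf x_1\|_{\infty}$ may be as large as $\epsilon$. Choosing the centre is exactly what buys the factor of $\tfrac{1}{2}$ needed to meet both $\tfrac{\epsilon}{2}$ constraints at once.
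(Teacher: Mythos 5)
Your proof is correct and follows essentially the same route as the paper: the paper also constructs $\mathbf u$ as the per-coordinate midpoint of the cluster's bounding box, i.e. $\mathbf u(p) = \tfrac{1}{2}(f^p_{min}+f^p_{max})$, deduces $\|\mathbf x_i - \mathbf u\|_\infty \le \tfrac{\epsilon}{2}$ from Assumption A.2, and notes that applying $\mathbf S_i$ can only decrease the $\ell_\infty$ norm, so the bound holds deterministically for every realisation of the sampling. Your remarks on why the probability is exactly $1$ and why $\mathbf u = \mathbf x_1$ would not suffice are accurate refinements of the same argument.
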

The proof of this lemma is in Appendix \ref{appCent}.
\begin{lemma}
\label{partDist}
Consider any two points $\mathbf x_1$ and $\mathbf x_2$ from different clusters, and assume that $\kappa<1$. A solution $\mathbf u$ exists for the following equations:
	\begin{eqnarray}
\|\mathbf S_i\,(\mathbf x_i-\mathbf u)\|_{\infty} &\leq& {\frac{\epsilon}{2}}; ~ ~i=1,2
\end{eqnarray}
with probability less than $\beta_0$.
\end{lemma}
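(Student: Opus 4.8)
The plan is to first convert the existence question into a deterministic condition on the commonly observed coordinates, and then bound the probability of that condition. Writing $\mathbf d = \mathbf x_1 - \mathbf x_2$ and letting $\mathcal I_1,\mathcal I_2$ be the sampled index sets of the two points, I would note that on any coordinate observed by only one point the corresponding constraint is met trivially by setting $\mathbf u$ equal to that point there; so both constraints can be satisfied simultaneously if and only if, for every $p \in \mathcal I_1\cap\mathcal I_2$, some value lies within $\epsilon/2$ of both $\mathbf x_1(p)$ and $\mathbf x_2(p)$. By the triangle inequality this holds exactly when $|\mathbf d(p)| \le \epsilon$ for all $p\in\mathcal I_1\cap\mathcal I_2$ (take $\mathbf u(p)$ to be the midpoint). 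Hence the event that a feasible $\mathbf u$ exists is precisely the event that none of the commonly observed coordinates is \emph{large}, where I call $p$ large if $|\mathbf d(p)| > \epsilon$. This is the noisy analogue of \eqref{commonsamples}.

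Next I would lower-bound the number $|L|$ of large coordinates using the cluster assumptions. Splitting the $\ell_2$ energy over large and small coordinates gives $\|\mathbf d\|_2^2 \le |L|\,\|\mathbf d\|_\infty^2 + P\epsilon^2$, since there are at most $P$ small coordinates, each contributing at most $\epsilon^2$. The coherence bound \eqref{coherence} gives $\|\mathbf d\|_\infty^2 \le (\mu_0/P)\|\mathbf d\|_2^2$, and assumption A.1, \eqref{deltadef}, gives $\|\mathbf d\|_2\ge\delta$, so that $P\epsilon^2/\|\mathbf d\|_2^2 \le P\epsilon^2/\delta^2 = \kappa^2$. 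Combining these yields $|L| \ge \frac{P(1-\kappa^2)}{\mu_0}$, which is strictly positive precisely because $\kappa<1$. This step is where all three assumptions enter and where the hypothesis $\kappa<1$ becomes essential.

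For the probability, I would condition on $q = |\mathcal I_1\cap\mathcal I_2|$, which is $\mathrm{Binomial}(P,p_0^2)$ because each coordinate is observed in both points independently with probability $p_0^2$. Let $B$ be the event $q\ge p_0^2P/2$. A sharp multiplicative Chernoff lower-tail bound at half the mean gives exactly $P(B^c)\le\gamma_0$ as in \eqref{gamma0}. Conditioned on $B$, the common support is a uniformly random $q$-subset of the $P$ coordinates, a feasible $\mathbf u$ exists only if this subset avoids all $|L|$ large coordinates, and bounding this hypergeometric avoidance probability using $|L|\ge P(1-\kappa^2)/\mu_0$ and $q\ge p_0^2P/2$ is intended to deliver the conditional estimate $P(\text{feasible}\mid B)\le\delta_0$ of \eqref{delta0}. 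Combining via $P(\text{feasible}) \le \delta_0 + (1-\delta_0)\,P(B^c) \le \delta_0 + (1-\delta_0)\gamma_0 = 1-(1-\delta_0)(1-\gamma_0) = \beta_0$, defined in \eqref{beta0}, then gives the claim.

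The step I expect to be the main obstacle is the conditional estimate $P(\text{feasible}\mid B)\le\delta_0$. Bounding the avoidance probability by $\binom{P-|L|}{q}/\binom{P}{q} \le (1-|L|/P)^q \le e^{-q|L|/P}$ with $q\ge p_0^2P/2$ produces an exponent $\tfrac12 p_0^2P(1-\kappa^2)/\mu_0$, linear in $(1-\kappa^2)/\mu_0$ and carrying a factor $\tfrac12$, which does not reproduce the quadratic exponent of \eqref{delta0} in every regime; closing this gap through conditioning would require a sharper hypergeometric tail bound. I would therefore prefer to bypass the conditioning: since each of the $|L|$ large coordinates is commonly observed independently with probability $p_0^2$, the feasibility event is exactly the event that no large coordinate is commonly observed, whose probability equals $(1-p_0^2)^{|L|} \le e^{-p_0^2 P(1-\kappa^2)/\mu_0} \le \delta_0 \le \beta_0$, the last two inequalities following from $\mu_0\ge 1\ge 1-\kappa^2$ and from $\beta_0 = \delta_0 + \gamma_0(1-\delta_0)\ge\delta_0$. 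This proves the lemma directly and even dispenses with the $\gamma_0$ term, though presenting it through the $\gamma_0,\delta_0$ decomposition keeps the notation consistent with the multi-point generalization in Lemma \ref{smallClusters}.
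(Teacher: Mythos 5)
Your proposal is correct, but it takes a genuinely different route from the paper's proof. The paper argues by contradiction through the $\ell_2$ partial distance: it relaxes the $\ell_\infty$ feasibility condition on the common support to $\|\mathbf y_\omega\|_2^2 \le \epsilon^2 q$, bounds $q$ from below via a Chernoff bound (giving the $\gamma_0$ term of \eqref{gamma0}), and then, conditioned on $q \ge p_0^2P/2$, applies a McDiarmid concentration bound for sampling without replacement (Lemma \ref{partialDistBasic}) to show the partial distance exceeds $\epsilon^2 q$ with conditional failure probability at most $\delta_0$; the two failure modes combine into $\beta_0$ of \eqref{beta0}. You instead keep the feasibility condition exact --- a feasible $\mathbf u$ exists if and only if no coordinate with $|\mathbf d(p)| > \epsilon$ lies in $\mathcal I_1 \cap \mathcal I_2$ --- lower-bound the number of such large coordinates deterministically by $|L| \ge P(1-\kappa^2)/\mu_0$ (this counting step is where \eqref{deltadef}, \eqref{coherence} and $\kappa<1$ enter, playing exactly the role they play in the paper's choice of $t$), and then compute the avoidance probability exactly as $(1-p_0^2)^{|L|}$ using independence of the sampling across coordinates. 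This is more elementary (no concentration inequality, no conditioning on $q$) and strictly stronger: your bound $e^{-p_0^2P(1-\kappa^2)/\mu_0}$ is at most $\delta_0$ of \eqref{delta0} since $\mu_0 \ge 1 \ge 1-\kappa^2$, hence at most $\beta_0$, and the $\gamma_0$ term is never needed. You are also right to discard the conditional/hypergeometric variant: its exponent $p_0^2P(1-\kappa^2)/(2\mu_0)$ fails to dominate the exponent of $\delta_0$ precisely when $\mu_0 < 2(1-\kappa^2)$, so that route would not recover the paper's constants, whereas your unconditional computation sidesteps the issue entirely. What the paper's heavier machinery buys is mainly structural consistency: the $\gamma_0/\delta_0$ decomposition and the partial-$\ell_2$-distance viewpoint are the quantities studied and plotted later, and they echo the partial-distance initialization heuristic; the price of that $\ell_2$ relaxation is the weaker, quadratic exponent in \eqref{delta0}. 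Since Lemma \ref{smallClusters} and Theorem \ref{mainResult} only use the fact that the pairwise misclassification probability is at most $\beta_0$, your argument substitutes into the rest of the paper without modification.
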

The proof of this lemma is in Appendix \ref{lemma2.2proof}. We note that $\beta_0$ decreases with a decrease in $\kappa$. A small $\epsilon$ implies less variability within clusters and a large $\delta$ implies well-separated clusters, together resulting in a low value of $\kappa$. Both these characteristics are desirable for clustering and result in a low value of $\beta_0$. This lemma also demonstrates that the coherence assumption is important in ensuring that the sampled entries are sufficient to distinguish between a pair of points from different clusters. As a result, $\beta_0$ decreases with a decrease in the value of $\mu_0$. As expected, we also observe that $\beta_0$ decreases with an increase in $p_0$. 

The above result can be generalized to consider a large number of points from multiple clusters. If we choose $M$ points such that not all of them belong to the same cluster, then it can be shown that with high probability, they cannot share the same $\mathbf u$ without violating the constraints in \eqref{l0prob}. This idea (a generalization of Result \ref{noiselessLemmA.2}) is expressed in the following lemma:

\begin{lemma}
\label{smallClusters}
Assume that $\{\mathbf x_i: i\in \mathcal I, |\mathcal I|= M\}$ is a set of points chosen randomly from multiple clusters (not all are from the same cluster). If  $\kappa<1$, a solution $\mathbf u$ does not exist for the following equations:
\begin{equation}
\|\mathbf S_i\,(\mathbf x_i-\mathbf u)\|_{\infty} \leq {\frac{\epsilon}{2}}; ~ ~\forall i \in \mathcal I
\end{equation} 
with probability exceeding $1 - \eta_0$.
\end{lemma}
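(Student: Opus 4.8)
The plan is to reduce the existence of a single common $\mathbf u$ for the whole group to a collection of pairwise conditions, bound each pair with Lemma \ref{partDist}, and then union over the possible cluster configurations. First I would observe that the feasibility system $\|\mathbf S_i(\mathbf x_i - \mathbf u)\|_\infty \le \epsilon/2$ for all $i \in \mathcal I$ is \emph{monotone} in the following sense: if a single $\mathbf u$ satisfies all the constraints, then that same $\mathbf u$ satisfies the two-point constraints for \emph{every} pair $(i,j)$ drawn from the group. In particular, restricting to those pairs whose points lie in different clusters, the event ``a common $\mathbf u$ exists for the group'' is contained in the event ``every cross-cluster pair is individually feasible.'' This is the step that lets Lemma \ref{partDist} enter, since each cross-cluster pair is feasible with probability at most $\beta_0$.

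Next I would organize the union bound by \emph{cluster configuration}. Suppose the $M$ chosen points split as $m_j$ points in cluster $j$, so that $\sum_j m_j = M$ and the number of represented clusters $\mathcal U(\{m_j\})$ lies between $2$ and $K$; this is exactly the index set $\mathcal S$ appearing in \eqref{eta0}. A direct count shows that the number of unordered cross-cluster pairs in such a configuration equals $\binom{M}{2} - \sum_j \binom{m_j}{2} = \frac{1}{2}\big(M^2 - \sum_j m_j^2\big)$, which is precisely the exponent of $\beta_0$ in \eqref{eta0}. The combinatorial weight $\prod_j \binom{M}{m_j}$ then arises as the number of distinct size-$M$ groups realizing that configuration (choosing $m_j$ of the $M$ points of cluster $j$, independently across clusters). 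Summing the per-configuration bound over all $\{m_j\} \in \mathcal S$ reproduces $\eta_0$, and since a failure of Lemma \ref{smallClusters} means some such multi-cluster group admits a common $\mathbf u$, a final union bound yields the claimed probability $1 - \eta_0$.

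The hard part will be bounding, for a fixed configuration, the probability that \emph{all} of its $\frac{1}{2}(M^2 - \sum_j m_j^2)$ cross-cluster pairs are simultaneously feasible by the product $\beta_0^{\frac{1}{2}(M^2-\sum_j m_j^2)}$. The pairwise feasibility events are not independent: two pairs sharing a point share that point's sampling pattern, and by the monotonicity noted above such events are in fact positively correlated, so a naive product is rigorously justified only for a collection of \emph{vertex-disjoint} pairs, which would produce a much smaller exponent. I expect this to be the crux. The most promising route to the stated exponent is to abandon the pairwise reduction at the last stage and instead factor the feasibility probability \emph{across features}: the sampling indicators of distinct features are genuinely independent, and feasibility of the group is equivalent to the condition that, for every feature, the points sampling it are pairwise within $\epsilon$ in that coordinate. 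Combining this feature-wise factorization with the coherence bound \eqref{coherence} — which, as in Lemma \ref{partDist}, forces at least $P(1-\kappa^2)/\mu_0$ distinguishing features for each cross-cluster pair when $\kappa<1$, and hence many chances for a commonly sampled distinguishing feature — is what I would use to control each factor and assemble the bound. Verifying that this assembly matches the exponent in \eqref{eta0}, rather than losing constants, is where the real work lies.
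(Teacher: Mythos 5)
Your first two steps reproduce the paper's own proof almost verbatim. The paper casts the argument as a random-graph clique computation in the style of \cite{matula1976largest}: points are nodes, same-cluster pairs are joined with probability $1$ (Lemma \ref{cent}), cross-cluster pairs with probability $\beta_0$ (Lemma \ref{partDist}); the failure event is the existence of an $M$-clique spanning more than one cluster, and the first-moment bound $\mathbb P(n\neq 0)\le E(n)$ is evaluated configuration-by-configuration, with exactly your weight $\prod_j \binom{M}{m_j}$ and exponent $\frac{1}{2}\bigl(M^2-\sum_j m_j^2\bigr)$.

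The ``crux'' you refuse to take on faith is precisely the step the paper performs without justification: it declares that each cross-cluster edge is ``present with probability $\beta_0$'' and multiplies, i.e.\ it treats the pairwise feasibility events as independent. Your objection is not only legitimate but provably fatal to that step. Each event $E_{ij}=\{\exists\,\mathbf u:\ \|\mathbf S_k(\mathbf x_k-\mathbf u)\|_\infty\le \epsilon/2,\ k=i,j\}$ is a decreasing function of the i.i.d.\ sampling indicators (removing observations only relaxes the constraints), so by the FKG inequality the events are positively associated and $\mathbb P\bigl(\bigcap E_{ij}\bigr)\ge \prod \mathbb P(E_{ij})$; the product is an estimate in the wrong direction. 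Worse, the per-configuration bound is genuinely false in regimes the theorem must cover: take $m_1=1$, $m_2=M-1$. With probability $(1-p_0)^P$ the lone cluster-$1$ point has no observed entries at all; its constraint is then vacuous, and the midpoint construction of Lemma \ref{cent} supplies a common $\mathbf u$ for the remaining $M-1$ points, so the group is feasible. Hence the joint feasibility probability is at least $(1-p_0)^P$, a constant independent of $M$, whereas the claimed bound $\beta_0^{M-1}$ tends to $0$; for fixed $p_0,P,\kappa,\mu_0$ with $\beta_0<1$ and $M$ large, $(1-p_0)^P$ even exceeds the entire sum $\eta_0$ in \eqref{eta0}.

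For the same reason, your proposed repair by feature-wise factorization cannot recover the stated exponent either: the obstruction is not lost constants but the statement itself, whose failure probability has a floor of order $(1-p_0)^P$ coming from severely under-sampled points, which $\eta_0$ does not dominate for large $M$. A rigorous version would have to either augment $\eta_0$ by first conditioning on every point having on the order of $p_0P$ observed entries (the complementary probability, at most $KMe^{-p_0P/8}$ by Chernoff, then enters the failure bound) and still control the residual dependence among pairs sharing a point, or settle for the much weaker exponent obtained from vertex-disjoint pairs, as you note. So the verdict is: the portion you completed is the paper's argument, and the step you declined to assert is exactly the unjustified step in the paper --- the gap you found is a real gap in the paper's proof (and, in some parameter regimes, in the lemma as stated), not a defect peculiar to your approach.
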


The proof of this lemma is in Appendix \ref{appSC}. We note here, that for a low value of $\beta_0$ and a high value of $M$ (number of points in each cluster), we will arrive at a very low value of $\eta_0$. Using Lemmas \ref{cent}, \ref{partDist} and \ref{smallClusters}, we now move on to our main result which is a generalization of Result \ref{noiselessMain}: 

\begin{theorem}
\label{mainResult}
If $\kappa<1$, the solution to the optimization problem \eqref{l0prob} is identical to the ground-truth clustering with probability exceeding $1 - \eta_0$.
\end{theorem}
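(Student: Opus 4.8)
The plan is to recast the optimization \eqref{l0prob} in terms of the partition of the $KM$ points induced by the equality pattern of the $\{\mathbf u_i\}$, and then to show that, on a high-probability event, the ground-truth partition is the unique minimizer. First I would observe that a feasible $\{\mathbf u_i\}$ groups the indices into sets $G_1,\ldots,G_r$ (where $i,j$ share a group iff $\mathbf u_i=\mathbf u_j$), and that the objective counts ordered between-group pairs, $\sum_{i,j}\|\mathbf u_i-\mathbf u_j\|_{2,0}=(KM)^2-\sum_{l}|G_l|^2$. Minimizing the cost is therefore equivalent to maximizing $\sum_l|G_l|^2$ subject to each group being \emph{feasible}, i.e.\ admitting a common $\mathbf u$ meeting the $\ell_\infty$ constraints. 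Using the generalization of Lemma \ref{cent} to all $M$ points of a cluster (take $\mathbf u=\mathbf c_k$, which lies within $\epsilon/2$ of every point of $C_k$ in the $\ell_\infty$ sense by assumption A.2), the ground-truth partition into the $K$ true clusters is feasible with probability $1$ and attains $\sum_l|G_l|^2=KM^2$, i.e.\ cost $L^\star=KM^2(K-1)$.

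Next I would introduce the good event $E$ that no set of exactly $M$ points drawn from two or more clusters admits a common feasible $\mathbf u$. By Lemma \ref{smallClusters}, applied as a union bound over all ways of distributing $M$ indices among the clusters (which is exactly what the sum defining $\eta_0$ in \eqref{eta0} encodes, with $\beta_0$ raised to the number of between-cluster pairs and $\prod_j\binom{M}{m_j}$ counting the index choices), we obtain $\Pr(E)\ge 1-\eta_0$. The heart of the argument is then to show that on $E$ the only feasible partition with $\sum_l|G_l|^2\ge KM^2$ is the ground truth. The crucial monotonicity observation is that if a group is feasible then so is every subset of it (the same $\mathbf u$ works), and any mixed group of size $\ge M$ contains a mixed subset of size exactly $M$; hence on $E$ no mixed group of size $\ge M$ can be feasible. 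Since a pure group lives inside a single cluster and so has at most $M$ points, every group of size $>M$ is mixed and therefore excluded on $E$, forcing all feasible groups to have size $\le M$.

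With all group sizes bounded by $M$, I would close the argument with the elementary convexity bound $\sum_l|G_l|^2\le M\sum_l|G_l|=KM^2$, where equality forces every nonempty group to have size exactly $M$; and on $E$ a feasible group of size $M$ must be pure, i.e.\ a full true cluster. Consequently, on $E$ the only feasible partition achieving cost $\le L^\star$ is the partition into the $K$ ground-truth clusters, which is therefore the unique minimizer of \eqref{l0prob}. Failure of the theorem is thus contained in $E^{c}$, giving success probability at least $1-\eta_0$.

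I expect the main obstacle to be the reduction from groups of arbitrary size to the size-$M$ statement of Lemma \ref{smallClusters}: one must verify both the monotonicity of feasibility under taking subsets and the combinatorial fact that every mixed group of size at least $M$ contains a mixed $M$-subset, and then check that the union bound over configurations reproduces precisely the expression for $\eta_0$ rather than a larger quantity. The convexity step pinning the maximizer of $\sum_l|G_l|^2$ to equal-size pure groups is routine, but care is needed to argue \emph{uniqueness} (not merely optimality), so that the recovered clustering is guaranteed to have no mis-classifications.
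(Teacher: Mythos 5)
Your proposal is correct, and its overall architecture matches the paper's proof: both rewrite the $\ell_{2,0}$ objective as $(KM)^2-\sum_l |G_l|^2$ so that minimizing the cost is maximizing $\sum_l |G_l|^2$, both use Lemma \ref{cent} (or directly assumption A.2 with $\mathbf u = \mathbf c_k$) to certify that the ground-truth partition is feasible with probability $1$, and both invoke Lemma \ref{smallClusters} to rule out mixed groups of size $\geq M$ on an event of probability at least $1-\eta_0$ --- and you correctly observe that the union bound over index configurations is already built into the definition \eqref{eta0} of $\eta_0$ (the paper's proof of Lemma \ref{smallClusters} bounds the expected number of mixed $M$-cliques over \emph{all} choices of nodes, so no extra multiplicative factor is needed). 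Where you genuinely depart from the paper is the concluding combinatorial step. The paper reduces the problem to maximizing $\sum_i M_i^2$ over a polyhedron (at most $K$ pure groups of size $\leq M$, mixed groups of size $\leq M-1$, sizes summing to $KM$), argues by convexity that the maximum is attained at a vertex, and then enumerates the vertex patterns to check that the ground-truth configuration $(M,\ldots,M,0,\ldots,0)$ wins. You replace this entirely with the elementary bound $\sum_l |G_l|^2 \leq M\sum_l |G_l| = KM^2$, with equality forcing every group to have size exactly $M$, which on the good event forces purity and hence the ground-truth partition. Your route buys three things: it avoids the paper's ``merge excess pure clusters'' normalization and the vertex enumeration (whose final comparison the paper leaves as ``easily checked''); it gives a strict-inequality argument, so uniqueness of the recovered partition is explicit rather than implicit; and it makes precise two facts the paper uses silently --- that feasibility of a group is inherited by subsets, and that any mixed group of size $\geq M$ contains a mixed subset of size exactly $M$, which is exactly why the size-$M$ statement of Lemma \ref{smallClusters} controls all larger groups.
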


The proof of the above theorem is in Appendix \ref{appMR}. The reasoning follows from Lemma \ref{smallClusters}. It is shown in the proof that all solutions with cluster sizes smaller than $M$ are associated with a higher cost than the ground-truth solution. 

\subsection{Clusters without Missing Entries}
\label{nomissing}
We now study the case where there are no missing entries. In this special case, optimization problem \eqref{l0prob} reduces to:
\begin{equation}
\label{l0probFull}
\begin{split}
\{\mathbf u_i^{*}\} = & \min_{\{\mathbf u_i\}} \sum_{i=1}^{KM}\sum_{j=1}^{KM}\|\mathbf u_i - \mathbf u_j\|_{2,0}\\ & \mbox{ s.t } \|\mathbf x_i - \mathbf u_i\|_{\infty} \leq \frac{\epsilon}{2}, ~i\in\{1 \ldots KM\}
\end{split}
\end{equation}
We have the following theorem guaranteeing successful recovery for clusters without missing entries:

\begin{theorem}
	\label{noMissingFinal}
	If $\kappa<1$, the solution to the optimization problem \eqref{l0probFull} is identical to the ground-truth clustering.
\end{theorem}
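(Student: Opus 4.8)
The plan is to mirror the argument behind Theorem \ref{mainResult}, but to exploit the fact that with no missing entries the probabilistic statements of Lemmas \ref{partDist} and \ref{smallClusters} collapse into deterministic ones, so the failure probability $\eta_0$ disappears entirely. Two ingredients are needed: (i) the ground-truth clustering is feasible for \eqref{l0probFull}, and (ii) when $\kappa<1$ no feasible solution can merge points coming from two different clusters. Combining these with a counting argument on the objective will identify the ground truth as the unique minimizer.

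First I would establish feasibility of the ground truth. By assumption A.2 the $\ell_\infty$ diameter of each cluster $\mathcal C_k$ is $\epsilon$, so in every coordinate the values of its points span a range of at most $\epsilon$. Taking $\mathbf u_i$ to be the per-coordinate midpoint of the box enclosing $\mathcal C_k$ (shared by all $\mathbf x_i\in\mathcal C_k$) yields $\|\mathbf x_i-\mathbf u_i\|_\infty\le \epsilon/2$. Hence assigning a single surrogate per cluster is feasible; this is exactly the generalization of Lemma \ref{cent} to all $M$ points of a cluster.

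The central step is the deterministic separation fact, the no-missing analog of Lemma \ref{partDist}. Suppose $\mathbf x_1\in\mathcal C_k$ and $\mathbf x_2\in\mathcal C_l$ with $k\ne l$ shared a common surrogate $\mathbf u$ satisfying both constraints. Then by the triangle inequality $\|\mathbf x_1-\mathbf x_2\|_\infty\le \|\mathbf x_1-\mathbf u\|_\infty+\|\mathbf u-\mathbf x_2\|_\infty\le\epsilon$, so $\|\mathbf x_1-\mathbf x_2\|_2\le\sqrt P\,\|\mathbf x_1-\mathbf x_2\|_\infty\le\epsilon\sqrt P$. But assumption A.1 forces $\|\mathbf x_1-\mathbf x_2\|_2\ge\delta$, giving $\kappa=\tfrac{\epsilon\sqrt P}{\delta}\ge1$, which contradicts $\kappa<1$. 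Thus for $\kappa<1$, $\mathbf u_i=\mathbf u_j$ can hold only when $\mathbf x_i$ and $\mathbf x_j$ belong to the same ground-truth cluster; this is the deterministic version of Lemma \ref{smallClusters} with no exceptional probability.

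Finally I would close by counting. The relation $\mathbf u_i=\mathbf u_j$ partitions the $KM$ points, and by the previous step every block is contained in a single ground-truth cluster, i.e. the partition refines the ground truth. Writing the objective as $(KM)^2-\sum_t g_t^2$, where $\{g_t\}$ are the block sizes, minimizing the cost is equivalent to maximizing $\sum_t g_t^2$. Within each cluster the block sizes sum to $M$, and by strict convexity of $x\mapsto x^2$ one has $\sum_t g_t^2\le M^2$ with equality iff the cluster is a single block; summing over clusters gives $\sum_t g_t^2\le KM^2$, attained uniquely by the ground-truth partition. The ground truth is therefore feasible, achieves the minimum cost, and is the unique minimizer. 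The only delicate point is this last uniqueness/strictness claim—verifying that any strict refinement strictly raises the cost—which is precisely the strict convexity of the square; since no probabilistic term appears, the conclusion is fully deterministic.
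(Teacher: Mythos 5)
Your proof is correct and follows essentially the same route as the paper's: the key step in both is the triangle-inequality contradiction showing that two points from different clusters cannot share a surrogate $\mathbf u$ when $\kappa<1$, combined with feasibility of the per-cluster box midpoint (Lemma \ref{cent}). Your closing refinement-plus-strict-convexity counting argument is simply a more explicit justification of what the paper's proof asserts as ``clearly'' the minimum-cost feasible solution, so no new ideas or gaps arise.
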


The proof for the above Theorem is in Appendix \ref{appNM}. We note that the above result does not consider any particular distribution of the points in each cluster. Instead, if we consider that the points in each cluster are sampled from certain particular probability distributions such as the uniform random distribution, then a larger $\kappa$ is sufficient to ensure success with high probability. In the general case where no such distribution is assumed, we cannot make a probabilistic argument, and a smaller $\kappa$ is required. We now consider a special case, where the noise $\mathbf n_k(m)$ is a zero mean uniform random variable $\sim U(-\epsilon/2,\epsilon/2)$. Thus, the points within each cluster are uniformly distributed in a cube of side $\epsilon$. We note that $\delta$ is now a random variable, and thus instead of using the constant $\kappa = \frac{\epsilon \sqrt{P}}{\delta}$ (as in previous lemmas), we define the following constant:
\begin{equation}
\label{kappaDash}
\kappa'=\frac{\epsilon \sqrt{P}}{c} 
\end{equation}
where $c$ is defined as the minimum separation between the centres of any $2$ clusters in the dataset:
\begin{equation}
\label{cDef}
\min_{\{k,l\}}\|\mathbf c_{k} -\mathbf c_{l}\|_{2} \geq c; ~\forall\; k\neq l
\end{equation}
We also define the following quantity:
\begin{equation}
\label{beta1}
\beta_1=e^{-\frac{P(1-\frac{5}{6}\kappa'^{2})^2}{8 \kappa'^{2}}}
\end{equation}
We arrive at the following result for two points in different clusters:
\begin{lemma}
	\label{noMissingUniform}
	Let $\kappa' < \sqrt{\frac{6}{5}}$. If the points in each cluster follow a uniform random distribution, then for two points $\mathbf x_1$ and $\mathbf x_2$ belonging to different clusters, a solution $\mathbf u$ exists for the following equations:
	\begin{eqnarray}
\label{dataconsistencynoise}
\|\mathbf x_i-\mathbf u\|_{\infty} &\leq& {\frac{\epsilon}{2}}; ~ ~i=1,2
\end{eqnarray}
with probability less than $\beta_1$.
\end{lemma}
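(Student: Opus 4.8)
The plan is to translate the existence of a feasible $\mathbf u$ into a purely geometric condition on the pair $(\mathbf x_1,\mathbf x_2)$ and then bound the probability of that condition under the uniform noise model. First I would observe that the constraint set $\{\mathbf u:\|\mathbf x_i-\mathbf u\|_\infty\le\epsilon/2\}$ is an axis-aligned cube of side $\epsilon$ centred at $\mathbf x_i$. Two such cubes have a common point if and only if their projections onto every coordinate overlap, which happens exactly when $|x_{1,p}-x_{2,p}|\le\epsilon$ for all $p$. Hence a solution $\mathbf u$ to \eqref{dataconsistencynoise} exists if and only if $\|\mathbf x_1-\mathbf x_2\|_\infty\le\epsilon$, and it suffices to bound the probability of this event by $\beta_1$.

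Writing $\mathbf d=\mathbf x_1-\mathbf x_2=\mathbf a+\mathbf w$, where $\mathbf a=\mathbf c_k-\mathbf c_l$ is the deterministic difference of the two cluster centres with $\|\mathbf a\|_2\ge c$ by \eqref{cDef}, and $\mathbf w=\mathbf n_1-\mathbf n_2$ collects the noise. Each entry $w_p$ is the difference of two independent $U(-\epsilon/2,\epsilon/2)$ variables, hence follows a symmetric triangular law on $(-\epsilon,\epsilon)$ with $\mathbb E[w_p]=0$ and $\mathbb E[w_p^2]=\epsilon^2/6$, and the $w_p$ are independent across $p$. Since $\|\mathbf d\|_\infty\le\epsilon$ forces $\|\mathbf d\|_2^2\le P\epsilon^2$, I would pass to the weaker event and bound $\Pr(\|\mathbf d\|_2^2\le P\epsilon^2)$.

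The random variable $T=\|\mathbf d\|_2^2=\sum_{p=1}^{P}(a_p+w_p)^2$ is a sum of independent bounded terms with $\mathbb E[T]=\|\mathbf a\|_2^2+P\epsilon^2/6$. The threshold lies below the mean by a gap $\mathbb E[T]-P\epsilon^2=\|\mathbf a\|_2^2-\tfrac{5}{6}P\epsilon^2\ge c^2-\tfrac{5}{6}P\epsilon^2=\frac{\epsilon^2 P}{\kappa'^2}\left(1-\tfrac{5}{6}\kappa'^2\right)$, which is strictly positive precisely under the hypothesis $\kappa'<\sqrt{6/5}$; this is exactly where that assumption enters. A one-sided Hoeffding (equivalently Chernoff) bound on the lower tail then gives $\Pr(T\le P\epsilon^2)\le\exp\left(-\frac{2(\mathbb E[T]-P\epsilon^2)^2}{\sum_p R_p^2}\right)$, where $R_p$ is the range of $(a_p+w_p)^2$ as $w_p$ varies over $(-\epsilon,\epsilon)$. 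Using $R_p\le 4\epsilon|a_p|$ yields $\sum_p R_p^2=16\epsilon^2\|\mathbf a\|_2^2$, and the resulting exponent $\frac{(\|\mathbf a\|_2^2-\tfrac56 P\epsilon^2)^2}{8\epsilon^2\|\mathbf a\|_2^2}$ is increasing in $\|\mathbf a\|_2^2$ once the gap is positive. Its worst case therefore occurs at $\|\mathbf a\|_2=c$, and substituting $c^2=\epsilon^2 P/\kappa'^2$ collapses the exponent to exactly $\frac{P(1-\tfrac56\kappa'^2)^2}{8\kappa'^2}$, i.e. the bound $\beta_1$ of \eqref{beta1}.

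The main obstacle is the concentration step together with the bookkeeping of the per-coordinate ranges: the clean identity $R_p=4\epsilon|a_p|$ holds only when $|a_p|\ge\epsilon$, while coordinates with $|a_p|<\epsilon$ have a slightly larger range, so some care is needed to confirm that $\sum_p R_p^2$ remains controlled by $16\epsilon^2\|\mathbf a\|_2^2$ (or that such coordinates only help the bound). I would also verify that the relaxation from the $\ell_\infty$ event to the $\ell_2$ event is not so wasteful as to spoil the stated constant, and check the monotonicity argument that pins the worst case at $\|\mathbf a\|_2=c$. This probabilistic core---showing that $T$ concentrates tightly enough below its mean---is precisely what distinguishes the uniform-distribution analysis from the deterministic worst-case bound of Theorem \ref{noMissingFinal} and is what permits the more lenient threshold $\kappa'<\sqrt{6/5}$ in place of $\kappa<1$.
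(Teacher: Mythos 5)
Your proposal follows essentially the same route as the paper's proof: relax the $\ell_\infty$ feasibility event to $\|\mathbf x_1-\mathbf x_2\|_2^2 \le P\epsilon^2$, compute the mean $c_{12}^2 + \tfrac{P}{6}\epsilon^2$, apply a bounded-difference (McDiarmid/Hoeffding) lower-tail bound with per-coordinate ranges $4\epsilon|a_p|$ summing to $16\epsilon^2 c_{12}^2$, and use monotonicity in $c_{12}$ to pin the worst case at $c_{12}=c$, recovering exactly $\beta_1$. The range caveat you flag (a coordinate with $|a_p|<\epsilon$ has oscillation $(\epsilon+|a_p|)^2 > 4\epsilon|a_p|$, which weakens rather than helps the bound) is in fact present but unaddressed in the paper as well, since it uses the endpoint difference $|(c_i+\epsilon)^2-(c_i-\epsilon)^2|$ as its bounded-difference constant.
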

The proof for the above lemma is in Appendix \ref{appNMU}. This implies that for $\kappa' <\sqrt{\frac{6}{5}}$, two points from different clusters cannot be misclassified to a single cluster with high probability. As $\eta_0$ is expressed in terms of $\beta_0$ in \eqref{eta0}, we can also express $\eta_1$ in terms of $\beta_1$. We get the following guarantee for perfect clustering:
\begin{theorem}
	\label{noMissingUnifFinal}
	If the points in each cluster follow a uniform random distribution and $\kappa' < \sqrt{\frac{6}{5}}$ , then  the solution to the optimization problem \eqref{l0probFull} is identical to the ground-truth clustering with probability exceeding $1 - \eta_1$.
\end{theorem}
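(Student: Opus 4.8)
The plan is to follow the template of the proof of Theorem \ref{mainResult}, replacing the sampling randomness (which governs $\beta_0$ via Lemma \ref{partDist}) by the randomness of the point positions $\mathbf n_k(m)$ (which governs $\beta_1$ via Lemma \ref{noMissingUniform}), and then re-running the same combinatorial cost argument. First I would record that the ground-truth clustering is always feasible for \eqref{l0probFull}: since the uniform noise satisfies $\|\mathbf n_k(m)\|_\infty \le \epsilon/2$, the choice $\mathbf u=\mathbf c_k$ gives $\|\mathbf x_i-\mathbf u\|_\infty\le\epsilon/2$ for every point of cluster $k$ (the deterministic analogue of Lemma \ref{cent}). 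Hence the feasible set is nonempty, and it suffices to show that, with probability at least $1-\eta_1$, no feasible solution has cost strictly below that of the ground truth.

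Second, I would establish the group-level analogue of Lemma \ref{smallClusters} with $\beta_1$ in place of $\beta_0$. The key observation, available because \eqref{l0probFull} involves no sampling matrices, is that a set of points shares a common feasible $\mathbf u$ if and only if, for every feature $p$, the intervals $[\mathbf x_i(p)-\epsilon/2,\;\mathbf x_i(p)+\epsilon/2]$ have a common point; by the one-dimensional Helly property this is equivalent to every pair in the set being pairwise $\ell_\infty$-compatible. Since within-cluster pairs are automatically compatible by assumption A.2, a mixed set of $M$ points is feasible exactly when all of its cross-cluster pairs are compatible. For a fixed configuration $\{m_j\}\in\mathcal S$ (with $m_j$ points from ground-truth cluster $j$) the number of cross-cluster pairs is $\tfrac12(M^2-\sum_j m_j^2)$, and each such pair is compatible with probability at most $\beta_1$ by Lemma \ref{noMissingUniform} (valid since $\kappa'<\sqrt{6/5}$). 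Bounding the joint feasibility probability by the product over cross-cluster pairs, then taking a union bound over the $\prod_j\binom{M}{m_j}$ ways of assigning points to clusters and over all configurations in $\mathcal S$, reproduces the expression \eqref{eta0} with $\beta_0$ replaced by $\beta_1$; I would define this quantity to be $\eta_1$.

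Third, conditioned on the event of probability at least $1-\eta_1$ that no mixed set of $M$ points is feasible, I would run the cost argument verbatim from Theorem \ref{mainResult}. Writing the objective as $(KM)^2-\sum_g |g|^2$ over the groups $g$ of any feasible partition, minimizing the cost is equivalent to maximizing $\sum_g|g|^2$. Because feasibility is monotone under taking subsets, any feasible mixed group has size at most $M-1$, so every group has size at most $M$, and any group of size exactly $M$ must be pure and therefore equal to a complete ground-truth cluster. Subject to $\sum_g|g|=KM$ and $|g|\le M$, the sum $\sum_g|g|^2$ is uniquely maximized by $K$ groups of size $M$, which by purity can only be the $K$ ground-truth clusters. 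Thus the ground-truth clustering is the unique minimizer of \eqref{l0probFull}, which proves the theorem.

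The step I expect to be the main obstacle is the pairwise-to-group reduction in the second paragraph: bounding the probability that all cross-cluster pairs of a mixed group are simultaneously compatible by the product $\beta_1^{\frac12(M^2-\sum_j m_j^2)}$ of the individual pairwise bounds. The pairwise compatibility events share points and are therefore not independent, so the product form cannot be asserted naively; it must be justified from the independence of the noise $\mathbf n_k(m)$ across distinct points together with the feature-wise decomposition above, exactly as in the proof of Lemma \ref{smallClusters} for the missing-data case (Appendix \ref{appSC}). Once this lifting from $\beta_1$ to $\eta_1$ is in hand, the remainder is the purely combinatorial counting already carried out for Theorem \ref{mainResult}.
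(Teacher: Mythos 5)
Your proposal follows the paper's own route exactly: the paper obtains $\beta_1$ from Lemma \ref{noMissingUniform}, defines $\eta_1$ by substituting $\beta_1$ for $\beta_0$ in \eqref{eta0}, and then reuses the clique-counting argument of Lemma \ref{smallClusters} and the cost-comparison argument of Theorem \ref{mainResult}, which is precisely your three-step plan. The pairwise-to-group independence issue you flag as the main obstacle is genuine, but the paper does not resolve it either --- its proof of Lemma \ref{smallClusters} simply models cross-cluster edges as independently present with probability $\beta_0$ --- so your argument is complete to the same standard as the paper's.
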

Note that $\kappa = \kappa'\frac{c}{\delta}$. Thus, the above result allows for values $\kappa>1$. Our results show that if we do not consider the distribution of the points, then we arrive at the bound  $\kappa < 1$ with and without missing entries, as seen from Theorems \ref{mainResult} and \ref{noMissingFinal} respectively. A uniform random distribution can also be assumed in the case of missing entries. Similar to Theorem \ref{noMissingUnifFinal}, we expect an improved bound for the case with missing entries as well. 

\section{Relaxation of the $\ell_0$ penalty}

\subsection{Constrained formulation}

We propose to solve a relaxation of the optimization problem \eqref{l0prob}, which is more computationally feasible. The relaxed problem is given by:
\begin{equation}
\label{relax1}
\begin{split}
	\{\mathbf u_i^{*}\} = & \min_{\{\mathbf u_i\}} \sum_{i=1}^{KM}\sum_{j=1}^{KM}\phi\left(\|\mathbf u_i - \mathbf u_j\|_2\right)\\ & \mbox{ s.t } \|\mathbf S_i(\mathbf x_i - \mathbf u_i)\|_\infty \leq {\frac{\epsilon}{2}}, i\in\{1 \ldots KM\}
\end{split}
\end{equation}
where $\phi$ is a function approximating the $\ell_{0}$ norm. Some examples of such functions are:
\begin{itemize}
	\item $\ell_p$ norm: $\phi(x) = |x|^p$, for some $0<p<1$.
	\item $H_1$ penalty: $\phi(x) = 1-e^{-\frac{x^2}{2\sigma^2}}$.
\end{itemize}
These functions approximate the $\ell_0$ penalty more accurately for lower values of $p$ and $\sigma$, as illustrated in Fig \ref{FigApproxFunc}. We reformulate the problem using a majorize-minimize strategy. Specifically, by majorizing the penalty $\phi$ using a quadratic surrogate functional, we obtain:

\begin{figure}[!t]
	\centering
	\center{\includegraphics[width=0.49\textwidth]{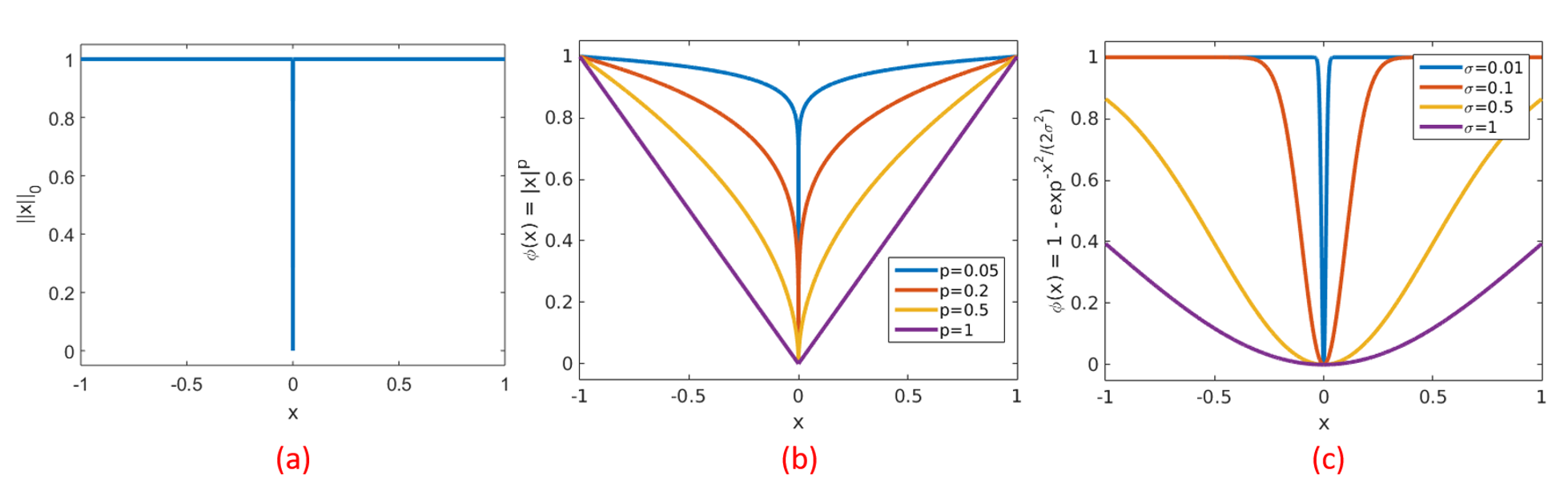}}
	\caption{\small Different penalty functions $\phi$. (a) The $\ell_0$ norm (b) The $\ell_p$ penalty function which is non-convex for $0<p<1$ and convex for $p=1$ (c) The $H_1$ penalty function. The $\ell_p$ and $H_1$ penalties closely approximate the $\ell_0$ norm for low values of $p$ and $\sigma$ respectively.}
	\label{FigApproxFunc}
\end{figure}

\begin{equation}
\phi(x) \leq w(x) x^2 + d
\end{equation}
where $w(x) = \frac{\phi^{'}(x)}{2x}$, and $d$ is a constant. For the two penalties considered here, we obtain the weights as:
\begin{itemize}
	\item $\ell_p$ norm: $w(x) = (\frac{2}{p}x^{(2-p)}+\alpha)^{-1} $ . The infinitesimally small $\alpha$ term is introduced to deal with situations where $x=0$. For non-zero $x$, we get the expression $w(x) \approx \frac{p}{2}x^{p-2}$.
	\item $H_1$ penalty: $w(x) = \frac{1}{2\sigma^2}e^{-\frac{x^2}{2\sigma^2}}$.
\end{itemize}
We can now state the majorize-minimize formulation for problem \eqref{relax1} as:
\begin{equation}
\begin{split}
\label{relaxConstrained}
\{\mathbf u_i^*, w_{ij}^*\} = &\arg \min_{\{\mathbf u_i,w_{ij}\}} \sum_{i=1}^{KM} \sum_{j=1}^{KM} w_{ij}~\|\mathbf u_i - \mathbf u_j\|_2^2\\ &\mbox{ s.t } \|\mathbf S_i(\mathbf x_i - \mathbf u_i)\|_\infty \leq \;{\frac{\epsilon}{2}}, i\in\{1 \ldots KM\}
\end{split}
\end{equation}
where the constant $d$ has been ignored. In order to solve problem \eqref{relaxConstrained}, we alternate between two sub-problems till convergence. At the $n^{th}$ iteration, these sub-problems are given by:
\begin{equation}
\label{sub1IRLS}
w_{ij}^{(n)} = \frac{\phi^{'}\left(\|\mathbf u_i^{(n-1)} - \mathbf u_j^{(n-1)}\|_2\right)}{2\|\mathbf u_i^{(n-1)} - \mathbf u_j^{(n-1)}\|_2}
\end{equation}

\begin{equation}
\label{sub2IRLS}
\begin{split}
\{\mathbf u_i^{(n)}\} = & \arg \min_{\{\mathbf u_i\}} \sum_{i=1}^{KM} \sum_{j=1}^{KM} w_{ij}^{(n)}\|\mathbf u_i - \mathbf u_j\|_2^2\\ &\mbox{ s.t } \|\mathbf S_i(\mathbf x_i - \mathbf u_i)\|_\infty \leq \;{\frac{\epsilon}{2}}, i\in\{1 \ldots KM\}
\end{split}
\end{equation}

\begin{figure}[!t]
	\centering
	\center{\includegraphics[width=0.49\textwidth]{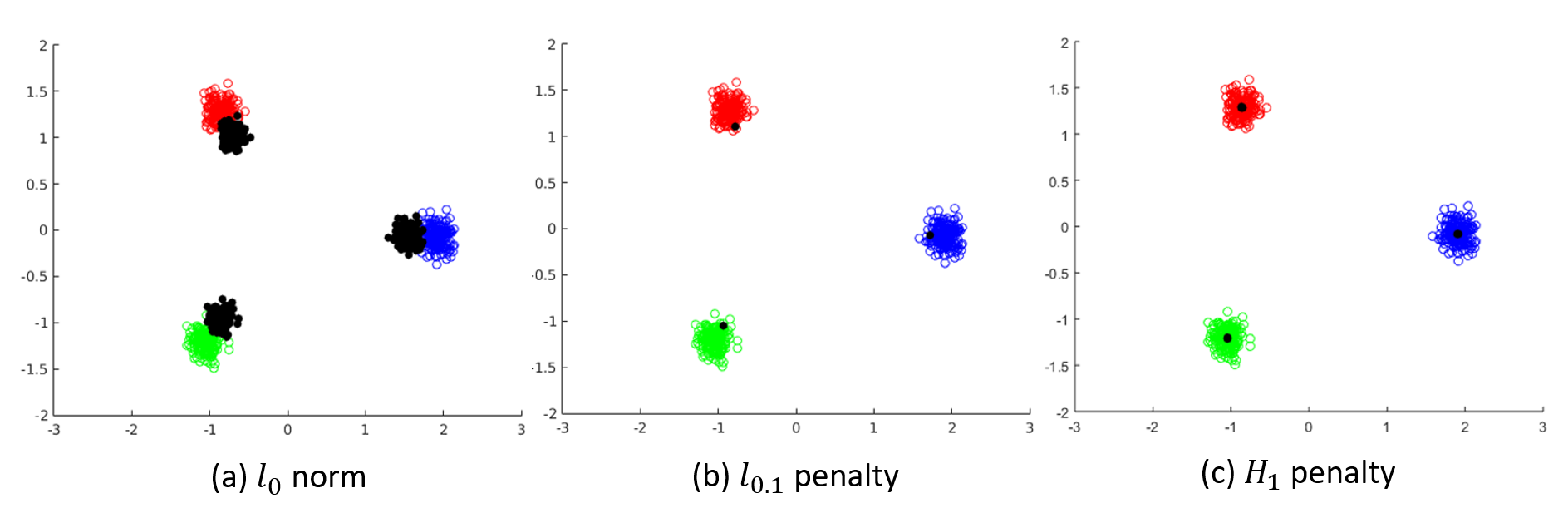}}
	\caption{Comparison of different penalties. We show here the 2 most significant principal components of the solutions obtained using the IRLS algorithm. (a) It can be seen that the $\ell_{1}$ penalty is unable to cluster the points even though the clusters are well-separated. (b) The $\ell_{0.1}$ penalty is able to cluster the points correctly. However, the cluster-centres are not correctly estimated. (c) The $H_1$ penalty correctly clusters the points and also gives a good estimate of the centres.}
	\label{compareFig}
\end{figure}

\subsection{Unconstrained formulation}

For larger datasets, it might be computationally intensive to solve the constrained problem. In this case, we propose to solve the following unconstrained problem:

\begin{equation}
\label{relax}
\{\mathbf u_i^*\} = \arg \min_{\{\mathbf u_i\}} \sum_{i=1}^{KM}\|\mathbf S_i(\mathbf u_i - \mathbf x_i)\|_2^2 + \lambda \sum_{i=1}^{KM} \sum_{j=1}^{KM} \phi(\|\mathbf u_i - \mathbf u_j\|_2)
\end{equation}
As before, we can state the majorize-minimize formulation for problem \eqref{relax} as:
\begin{equation}
\label{relaxIRLS}
\begin{split}
 \{\mathbf u_i^*, w_{ij}^*\} = \arg \min_{\{\mathbf u_i,w_{ij}\}}\sum_{i=1}^{KM}\| &\mathbf S_i(\mathbf u_i - \mathbf x_i)\|_2^2\\ + & \lambda \sum_{i=1}^{KM}\sum_{j=1}^{KM} w_{ij}\|\mathbf u_i - \mathbf u_j\|_2^2
 \end{split}
\end{equation}
In order to solve the problem \eqref{relaxIRLS}, we alternate between two sub-problems till convergence. The $1^{st}$ sub-problem is the same as \eqref{sub1IRLS}. The $2^{nd}$ sub-problem is given by:
\begin{equation}
\label{sub2IRLSunc}
\begin{split}
\{\mathbf u_i^{(n)}\} = \arg \min_{\{\mathbf u_i\}} \sum_{i=1}^{KM}\| &\mathbf S_i(\mathbf u_i - \mathbf x_i)\|_2^2\\ + &\lambda \sum_{i=1}^{KM} \sum_{j=1}^{KM} w_{ij}^{(n)}\|\mathbf u_i - \mathbf u_j\|_2^2
\end{split}
\end{equation}

\subsection{Comparison of penalties}
\label{compPen}

We compare the performance of different penalties when used as a surrogate for the $\ell_0$ norm. For this purpose, we use a simulated dataset with points in $\mathbb R^{50}$ belonging to $3$ well-separated clusters, with $200$ points in each cluster. For this particular experiment, we considered $\mathbf x_1, \mathbf x_2,\ldots, \mathbf x_{200} \in \mathcal C_1$, $\mathbf x_{201}, \mathbf x_{202},\ldots, \mathbf x_{400} \in \mathcal C_2$ and $\mathbf x_{401}, \mathbf x_{402},\ldots, \mathbf x_{600} \in \mathcal C_3$. We do not consider the presence of missing entries for this experiment. We solve problem \eqref{relax} to cluster the points using the $\ell_{1}$, $\ell_{p}$ (for $p=0.1$) and $H_1$ (for $\sigma=0.5$) penalties. The results are shown in Fig \ref{compareFig}. Only for the purpose of visualization, we take a PCA of the data matrix $\mathbf X \in \mathbb R^{50\times 600}$ and retain the $2$ most significant principal components to get a matrix of points $\in \mathbb R^{2\times 600}$. These points are plotted in the figure, with red, blue and green representing points from different clusters. We similarly obtain the $2$ most significant components of the estimated centres and plot the resulting points in black. In (b) and (c), we note that $\mathbf u_1^* = \mathbf u_2^* = \ldots = \mathbf u_{200}^*$, $\mathbf u_{201}^*= \mathbf u_{202}^*=\ldots = \mathbf u_{400}^*$ and $\mathbf u_{401}^*=\mathbf u_{402}^*=\ldots= \mathbf u_{600}^*$. Thus, the $\ell_p$ penalty and the $H_1$ penalty are able to correctly cluster the points. This behaviour is not seen in (a). Thus it is concluded that the convex $\ell_{1}$ penalty is unable to cluster the points. 

The cluster-centres estimated using the $\ell_{p}$ penalty are inaccurate. The $H_1$ penalty out-performs the other two penalties and accurately estimates the cluster-centres. We can explain this behaviour intuitively by observing the plots in Fig \ref{FigApproxFunc}. The $\ell_{1}$ norm penalizes differences between all pairs of points. The  $\ell_{0.1}$ semi-norm penalizes differences between points that are close. Due to the saturating nature of the penalty, it does not heavily penalize differences between points that are further away. The same is true for the $H_1$ penalty. However, we note that the $H_1$ penalty saturates to $1$ very quickly, similar to the $\ell_0$ norm. This behaviour is missing for the $\ell_{0.1}$ penalty. For this reason, it is seen that the $\ell_{0.1}$ penalty also penalizes inter-cluster distances (unlike the $H_1$ penalty), and shrinks the distance between the estimated centres of different clusters. 

\begin{figure*}
	\centering
	\center{\includegraphics[width=1\textwidth]{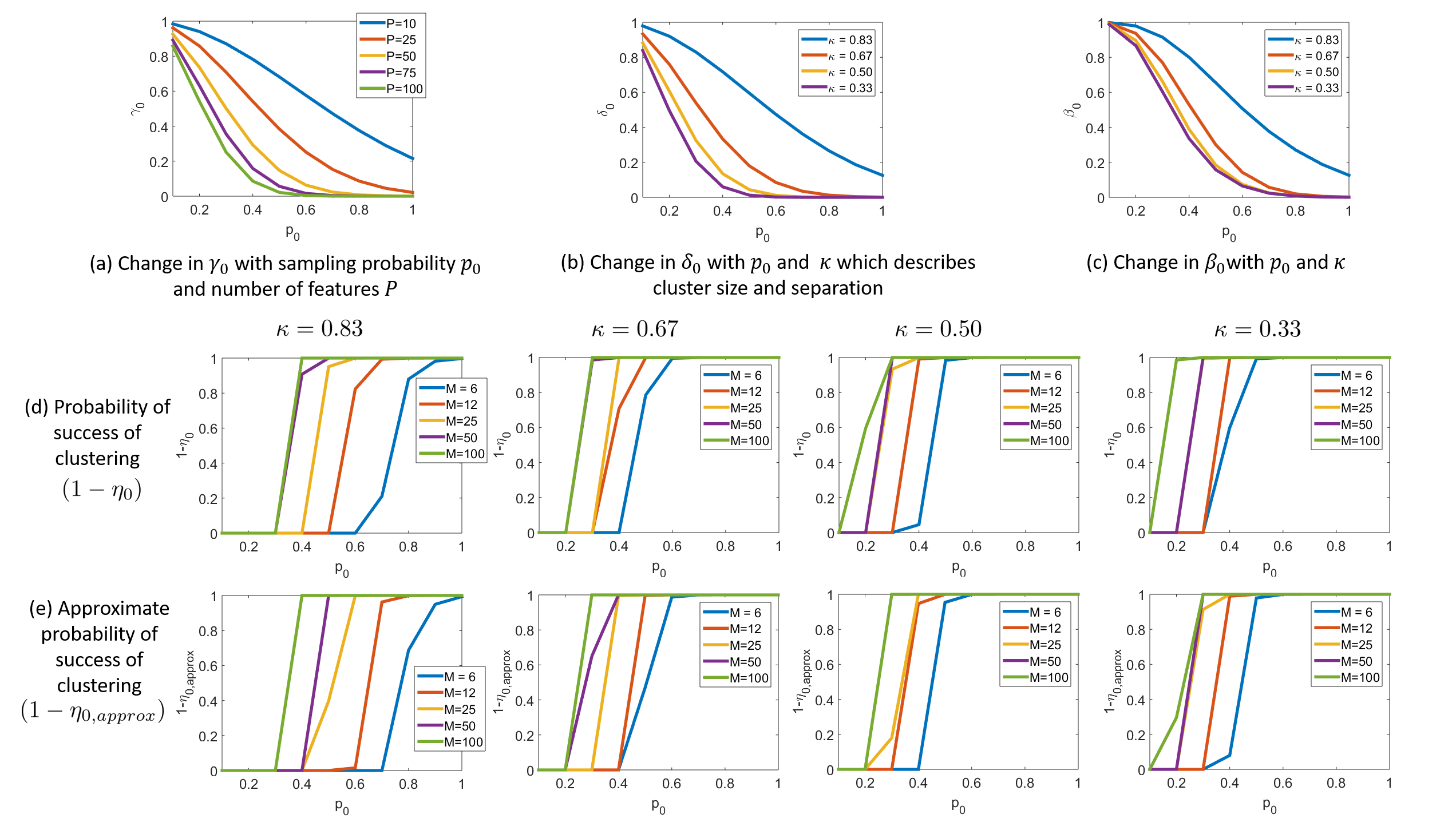}}
	\caption{Study of Theoretical Guarantees. The quantities $\gamma_0, \delta_0$ and $\beta_0$ defined in Section \ref{generalcase} are studied in (a), (b) and (c) respectively. In (b) and (c), $P=50$ and $\mu_0=1.5$ are assumed. $\beta_0$ gives the probability that 2 points from different clusters can share a centre. As expected, this value decreases with increase in $p_0$ and decrease in $\kappa$. Considering $K=2$ clusters, a lower bound for the  probability of successful clustering $(1-\eta_0)$ using the proposed algorithm is shown in (d) for different values of $\kappa$. The approximate values $(1-\eta_{0,{\rm approx}})$ computed using  \eqref{etaApprox} are shown in (e).}
	\label{FigTh}
\end{figure*}

\begin{figure}
	\centering
	\center{\includegraphics[width=0.49\textwidth]{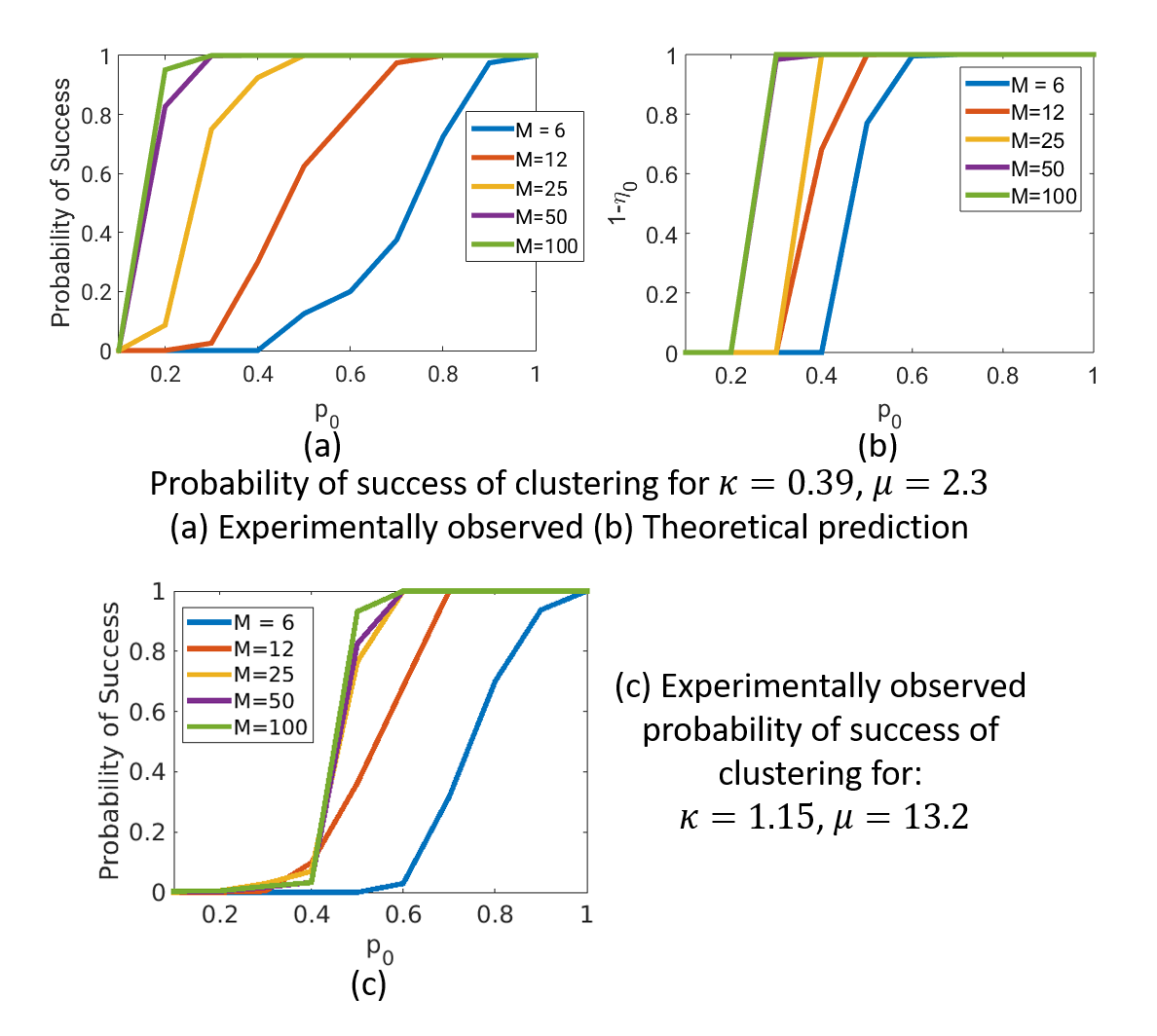}}
	\caption{Experimental results for probability of success. Guarantees are shown for a simulated dataset with $K=2$ clusters. The clustering was performed using \eqref{relaxConstrained} with an $H_1$ penalty and partial distance based initialization. For (a) and (b) it is assumed that $\kappa = 0.39$ and $\mu_0 = 2.3$. (a) shows the experimentally obtained probability of success of clustering for clusters with points from a uniform random distribution. (b) shows the theoretical lower bound for the probability of success. (c) shows the experimentally obtained probability of success for a more challenging dataset with $\kappa = 1.15$ and $\mu_0 = 13.2$. Note that we do not have theoretical guarantees for this case, since our analysis assumes that $\kappa < 1$.}
	\label{FigExpPlots}
\end{figure}

\subsection{Initialization Strategies}

Our experiments emphasize the need for a good initialization of the weights $w_{ij}$ for convergence to the correct cluster centre estimates. This dependence on the initial value arises from the non-convexity of the optimization problem. We consider two different strategies for initializing the weights:
\begin{itemize}
	\item Partial Distances: Consider a pair of points $\mathbf x_1, \mathbf x_2$ observed by sampling matrices $\mathbf S_1 = \mathbf S_{\mathcal I_1}$ and $\mathbf S_2 = \mathbf S_{\mathcal I_2}$ respectively. Let the set of common indices be $\omega \coloneqq \mathcal I_1 \cap \mathcal I_2$. We define the partial distance as $\|\mathbf y_\omega\| = \sqrt{\frac{P}{|\omega|}}\|\mathbf x_{1\omega} - \mathbf x_{2\omega}\|$, where $\mathbf x_{i\omega}$ represents the set of entries of $\mathbf x_i$ restricted to the index set $\omega$. Instead of the actual distances which are not available, the partial distances $\|\mathbf y_\omega\|$ can be used for computing the weights.
	\\
	\item Imputation Methods: The weights can be computed from estimates $\{\mathbf u_i^{(0)}\}$, where:
	\begin{equation}
	\mathbf u_i^{(0)} = \mathbf S_i \mathbf x_i + (\mathbf I- \mathbf S_i) \mathbf m 
	\end{equation}
	Here $\mathbf m$ is a constant vector, specific to the imputation technique. The zero-filling technique corresponds to $\mathbf m = \mathbf 0$. Better estimation techniques can be derived where the $j^{th}$ row of $\mathbf m$ can be set to the mean of all measured values in the $j^{th}$ row of $\mathbf X$.
\end{itemize}
We will observe experimentally that for a good approximation of the initial weights $\mathbf W^{(0)}$, we get the correct clustering. Conversely, the clustering fails for a bad initial guess. Our experiments demonstrate the superiority of a partial distance based initialization strategy over a zero-filled initialization.

\section{Results}

We study the proposed theoretical guarantees for Theorem \ref{mainResult} for different settings. We also test the proposed algorithm on simulated and real datasets. The simulations are used to study the performance of the algorithm with change in parameters such as fraction of missing entries, number of points to be clustered etc. We also study the effect of different initialization techniques on the algorithm performance. We demonstrate the algorithm on  the publicly available Wine dataset \cite{Lichman:2013}, and use the algorithm to reconstruct a dataset of under-sampled cardiac MR images.

\subsection{Study of Theoretical Guarantees}

We observe the behaviour of the quantities $\gamma_0, \delta_0, \beta_0, \eta_0$ and $\eta_{0,{\rm approx}}$ (defined in section \ref{generalcase}) as a function of parameters $p_0, P, \kappa$ and $M$. Fig \ref{FigTh} shows a few plots that illustrate the change in these quantities as the different parameters are varied. $\gamma_0$ is an upper bound for the probability that a pair of points have $< \frac{p_0^2P}{2}$ entries observed at common locations. In Fig \ref{FigTh} (a), the change in $\gamma_0$ is shown as a function of $p_0$ for different values of $P$. In subsequent plots, we fix $P=50$ and $\mu_0=1.5$. $\delta_0$ is an upper bound for the probability that a pair of points from  different clusters can share a common centre, given that $\geq \frac{p_0^2P}{2}$ entries are observed at common locations. In Fig \ref{FigTh} (b), the change in $\delta_0$ is shown as a function of $p_0$ for different values of $\kappa$. In Fig \ref{FigTh} (c), the behaviour of $\beta_0 = 1-(1-\gamma_0)(1- \delta_0)$ is shown, which is the probability mentioned in Lemma \ref{partDist}.

We consider the two cluster setting, (i.e. $K=2$) for subsequent plots. $\eta_0$ is the probability of failure of the clustering algorithm \eqref{l0prob}. In (d) and (e), plots are shown for $(1-\eta_0)$ and $(1-\eta_{0,{\rm approx}})$ as a function of $p_0$ for different values of $\kappa$ and $M$. Here, $\eta_{0,{\rm approx}}$ is an upper bound for $\eta_0$ computed using \eqref{etaApprox}. As expected, the probability of success of the clustering algorithm increases with  increase in $p_0$ and $M$ and decrease in $\kappa$.

\subsection{Clustering of Simulated Data}

We simulated datasets with $K=2$ disjoint clusters in $\mathbb R^{50}$ with a varying number of points per cluster ($M=6,12,25,50,100$). The points in each cluster follow a uniform random distribution. We study the probability of success of the $H_1$ penalty based constrained clustering algorithm (with partial-distance based initialization) as a function of $\kappa$, $M$ and $p_0$. For a particular set of parameters the experiment was conducted $20$ times to compute the probability of success of the algorithm. Between these $20$ trials, the cluster-centers remain the same, while the points sampled from these clusters are different and the locations of the missing entries are different. Fig \ref{FigExpPlots} (a) shows the result for datasets with $\kappa = 0.39$ and $\mu_0 = 2.3$. The theoretical guarantees for successfully clustering the dataset are shown in (b). Note that the theoretical guarantees do not assume that the points are taken from a uniform random distribution. Also, the theoretical bounds assume that we are solving the original problem using a $\ell_0$ norm, whereas the experimental results were generated for the $H_1$ penalty. Our theoretical guarantees hold for $\kappa < 1$. However, we demonstrate in (c) that even for the more challenging case where $\kappa = 1.15$ and $\mu_0 = 13.2$, our clustering algorithm is successful. Note that we do not have theoretical guarantees for this case. However, by assuming a uniform random distribution on the points, we expect that we can get better theoretical guarantees (similar to Theorem \ref{noMissingUnifFinal} for the case without missing entries).

Clustering results with $K=3$ simulated clusters are shown in Fig \ref{FigSim1}. We simulated Dataset-1 with $K=3$ disjoint clusters in $\mathbb R^{50}$ and $M=200$ points in each cluster. In order to generate this dataset, $3$ cluster centres in $\mathbb R^{50}$ were chosen from a uniform random distribution. The distances between the $3$ pairs of cluster-centres are $3.5$, $2.8$ and $3.3$ units respectively. For each of these $3$ cluster centres, $200$ noisy instances were generated by adding zero-mean white Gaussian noise of variance 0.1. The dataset was sub-sampled with varying fractions of missing entries ($p_0=1,0.9,0.8,\ldots,0.3,0.2$). The locations of the missing entries were chosen uniformly at random from the full data matrix. We also generate Dataset-2 by  halving the distance between the cluster centres, while keeping the intra-cluster variance fixed. We test both the constrained \eqref{relax1} and unconstrained \eqref{relax} formulations of our algorithm on these datasets. Both the proposed initialization techniques for the IRLS algorithm (i.e. zero-filling and partial-distance) are also tested here. Since the points lie in $\mathbb R^{50}$, we take a PCA of the points and their estimated centres (similar to Fig \ref{compareFig}) and plot the $2$ most significant components. The $3$ colours distinguish the points according to their ground-truth clusters. Each point $\mathbf x_i$ is joined to its centre estimate $\mathbf u_i^*$ by a line. As expected, we observe that the clustering algorithms are more stable with fewer missing entries. We also note that the results are quite sensitive to the initialization technique. We observe that the partial distance based initialization technique out-performs the zero-filled initialization. The unconstrained algorithm with partial distance-based initialization shows superior performance to the alternative schemes. Thus, we use this scheme for subsequent experiments on real datasets.

\subsection{Clustering of Wine Dataset}

We apply the clustering algorithm to the Wine dataset \cite{Lichman:2013}. The data consists of the results of a chemical analysis of wines from $3$ different cultivars. Each data point has $P=13$ features. The $3$ clusters have $59$, $71$ and $48$ points respectively, resulting in a total of $178$ data points. We created a dataset without outliers by retaining only $M=40$ points per cluster, resulting in a total of $120$ data points. We under-sampled these datasets using uniform random sampling with different fractions of missing entries. The results are displayed in Fig \ref{FigWine1} using the PCA technique as explained in the previous sub-section. It is seen that the clustering is quite stable and degrades gradually with increasing fractions of missing entries.

\subsection{Cardiac MR Image Reconstruction}

We apply the proposed algorithm to the reconstruction of a cardiac MR image time series. In MRI, samples are collected in the Fourier domain. However, due to the slow nature of the acquisition, only a small fraction of the Fourier samples can be acquired in each time frame. The goal of image reconstruction is to recover the image series from the incomplete Fourier observations. In the case of cardiac MRI, the different images in the time series appear in clusters determined  by the cardiac and respiratory phase. Thus, the proposed algorithm can be applied to the image reconstruction problem.

The cardiac data was acquired on a Siemens Aera MRI scanner at the University of Iowa. The subject was asked to breathe freely, and 10 radial lines of Fourier data was acquired to reconstruct each image frame. Fourier data corresponding to 1000 frames was acquired and the image series was reconstructed using the proposed unconstrained algorithm. We performed spectral clustering \cite{ng2001spectral} on the reconstructed images to form 20 clusters. A few reconstructed frames belonging to 2 different clusters are illustrated in Fig \ref{cardiacFig}. The images displayed have minimal artefacts and are of diagnostic quality.

\begin{figure*}
	\centering
	\center{\includegraphics[width=0.7\textwidth]{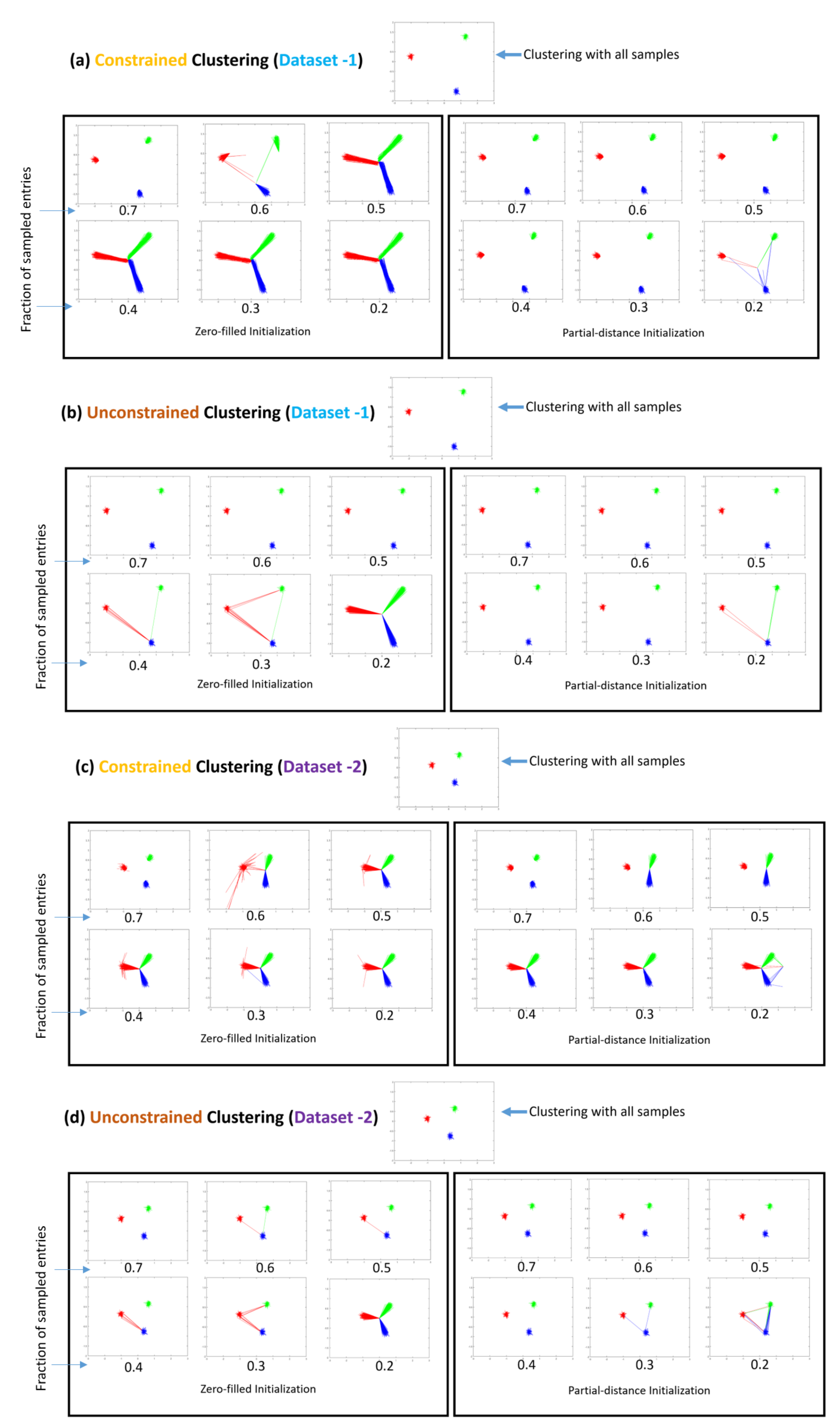}}
	\caption{Clustering results in simulated datasets. The $H_1$ penalty is used to cluster two datasets with varying fractions of missing entries. Both the constrained and unconstrained formulation results are presented with different initialization techniques (zero-filled and partial-distance based). We show here the 2 most significant principal components of the solutions. The original points $\{\mathbf x_i\}$ are connected to their cluster centre estimates $\{\mathbf u_i\}$ by lines. Inter-cluster distances in Dataset 2 are half of those in Dataset 1, while intra-cluster distances remain the same. Consequently, Dataset 1 performs better at a higher fraction of missing entries. For the unconstrained clustering formulation with partial-distance based initialization, the cluster centre estimates are relatively stable with varying fractions of missing entries.}
	\label{FigSim1}
\end{figure*}

\section{Discussion}

We have proposed a technique to cluster points when some of the feature values of all the points are unknown. We theoretically studied the performance of an algorithm that minimizes an $\ell_0$ fusion penalty subject to certain constraints relating to consistency with the known features. We concluded that under favourable clustering conditions, such as well-separated clusters with low intra-cluster variance, the proposed method performs the correct clustering even in the presence of missing entries. However, since the problem is NP-hard, we propose to use other penalties that approximate the $\ell_0$ norm. We observe experimentally that the $H_1$ penalty is a good surrogate for the $\ell_0$ norm. This non-convex saturating penalty is shown to perform better in the clustering task than previously used convex norms and penalties. We describe an IRLS based strategy to solve the relaxed problem using the surrogate penalty.

Our theoretical analysis reveals the various factors that determine whether the points will be clustered correctly in the presence of missing entries. It is obvious that the performance degrades with the decrease in the fraction of sampled entries ($p_0$). Moreover, it is shown that the difference between points from different clusters should have low coherence ($\mu_0$). This means that the expected clustering should not be dependent on only a few features of the points. Intuitively, if the points in different clusters can be distinguished by only $1$ or $2$ features, then a point missing these particular feature values cannot be clustered correctly. Moreover, we note that a high number of points per cluster ($M$), high number of features ($P$) and a low number of clusters ($K$) make the data less sensitive to missing entries. Finally, well-separated clusters with low intra-cluster variance (resulting in low values of $\kappa$) are desirable for correct clustering.

Our experimental results show great promise for the proposed technique. In particular, for the simulated data, we note that the cluster-centre estimates degrade gradually with increase in the fraction of missing entries. Depending on the characteristics of the data such as number of points and cluster separation distance, the clustering algorithm fails at some particular fraction of missing entries. We also show the importance of a good initialization for the IRLS algorithm, and our proposed initialization technique using partial distances is shown to work very well. 

\begin{figure}[!t]
	\centering
	\center{\includegraphics[width=0.45\textwidth]{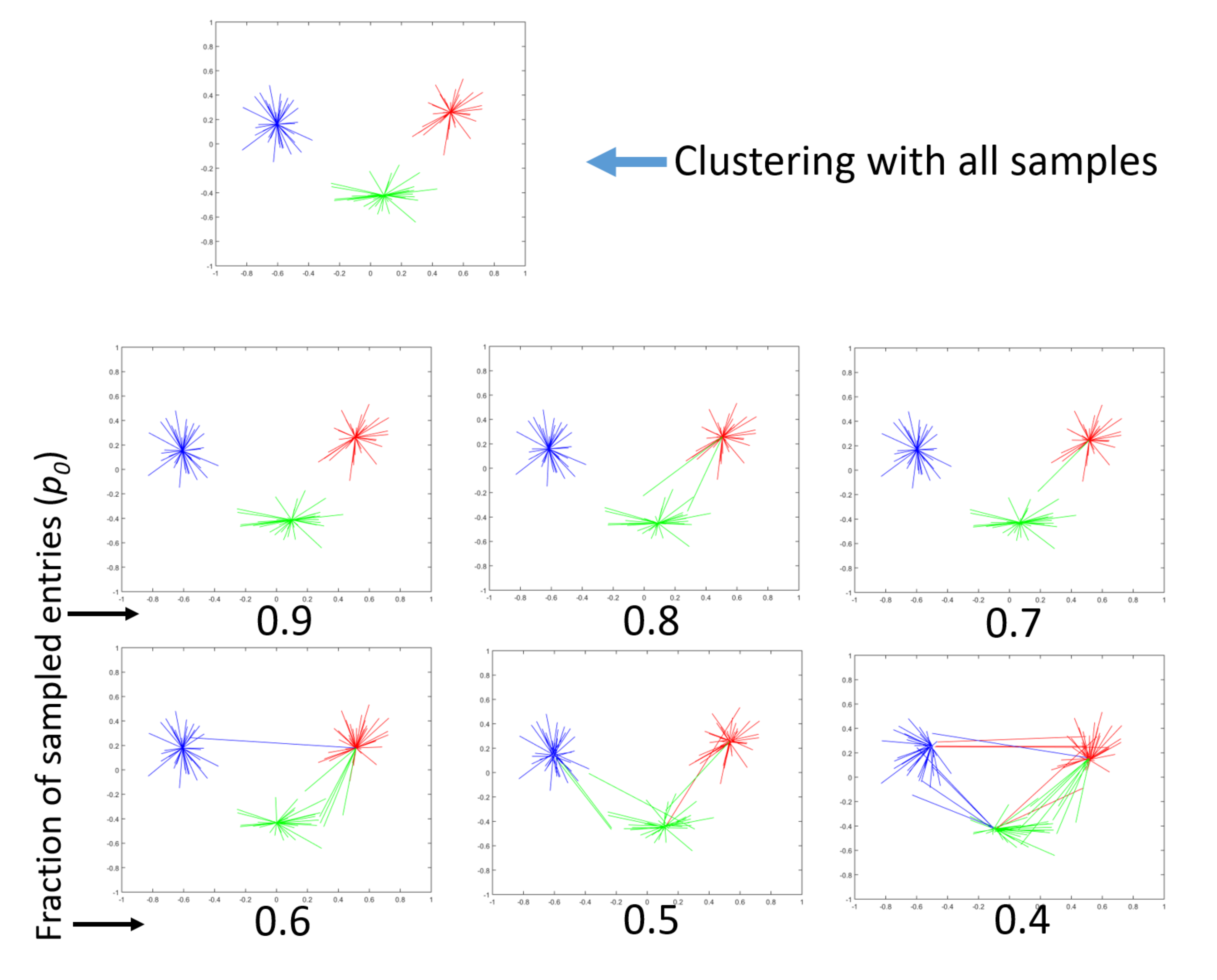}}
	\caption{Clustering on Wine dataset. The $H_1$ penalty is used to cluster the Wine datasets with varying fractions of missing entries. }
	\label{FigWine1}
\end{figure}

\begin{figure}[!t]
	\centering
	\center{\includegraphics[width=0.45\textwidth]{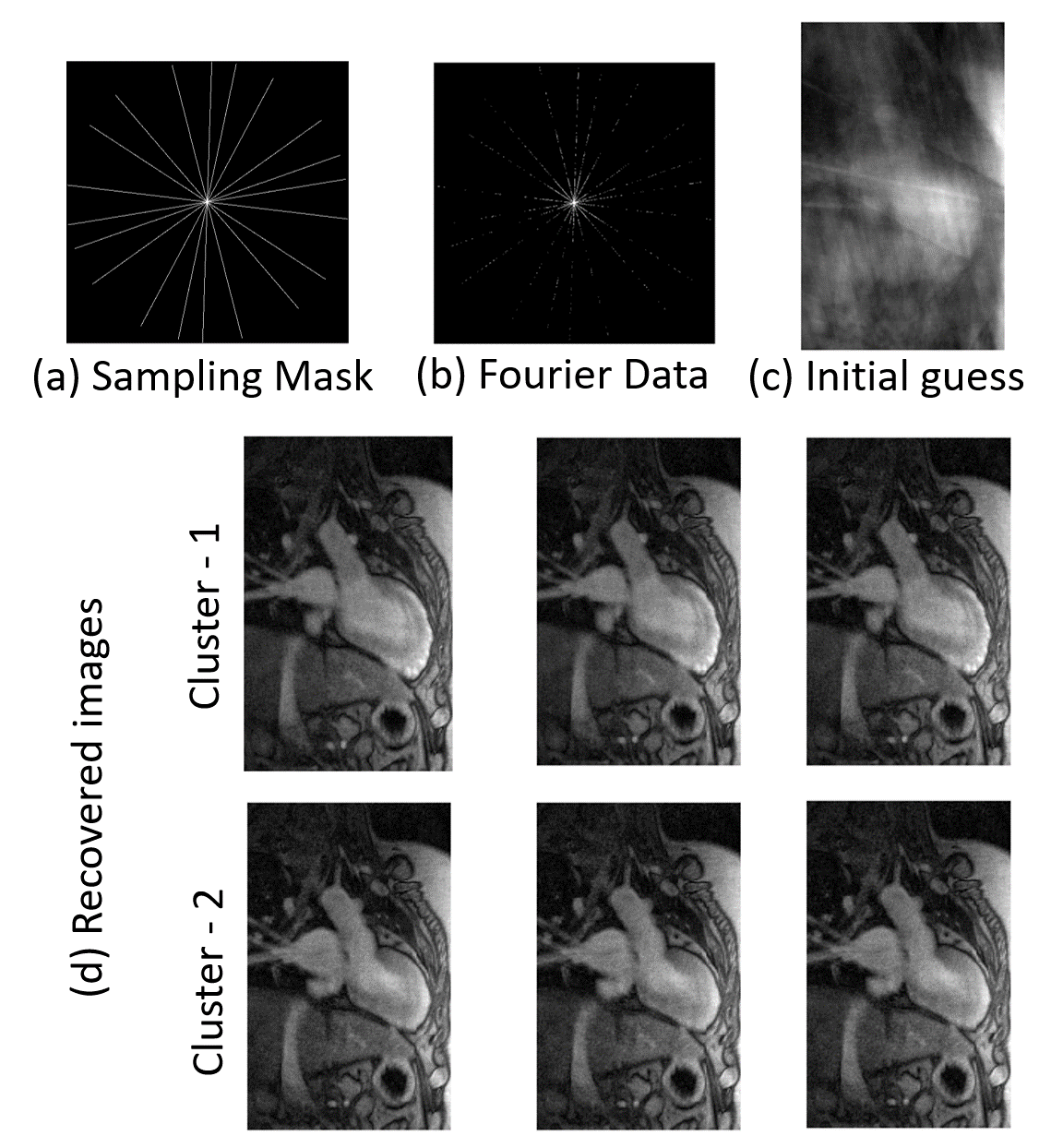}}
	\caption{Cardiac MRI reconstruction results. The  images were reconstructed from highly under-sampled Fourier data using the unconstrained formulation. A sampling mask for 1 particular frame is shown in (a), along with the Fourier data for that frame in (b). The missing Fourier entries were filled with zeros and an inverse Fourier Transform was taken to get the corrupted image in (c). The clustering algorithm was applied to this data and the resulting images were clustered into 20 clusters using spectral clustering. (d) shows some reconstructed images from 2 different clusters.}
	\label{cardiacFig}
\end{figure}

The proposed algorithm performs well on the MR image reconstruction task, resulting in images with minimal artefacts and diagnostic quality. It is to be noted that the MRI images are reconstructed satisfactorily from very few Fourier samples. In this case the fraction of observed samples is around $5\%$. However, we see that the simulated datasets and the Wine datasets cannot be clustered at such a high fraction of missing samples. The fundamental difference between the MRI dataset and the other datasets is the coherence $\mu_0$. For the MRI data, we acquire Fourier samples. Since we know that the low frequency samples are important for image reconstruction, the MRI scanner acquires more low frequency samples. This is a case where high coherence is helpful in clustering. However, for the simulated and Wine data, we do not know apriori which features are more important. In any case the sampling pattern is random, and as predicted by theory, it is more useful to have low coherence. The conclusion is that if the sampling pattern is within our control, it is useful to have high coherence  if the relative importance of the different features is known apriori. If this is unknown, then random sampling is preferred and it is useful to have low coherence. Our future work will focus on deriving guarantees for the case of high $\mu_0$ when the locations of the important features are known with some confidence, and the sampling pattern can be adapted accordingly.

Our theory assumes well-separated clusters and does not consider the presence of any outliers. Theoretical and experimental analysis for the clustering performance in the presence of outliers needs to be investigated. Improving the algorithm performance in the presence of outliers is a direction for future work. Moreover, we have shown improved bounds for the clustering success in the absence of missing entries when the points within a cluster are assumed to follow a uniform random distribution. We expect this trend to also hold for the case with missing entries. This case will be analyzed in future work.

\section{Conclusion}
We propose a clustering technique for data in the presence of missing entries. We prove theoretically that a constrained $\ell_0$ norm minimization problem recovers the clustering correctly even in the presence of missing entries. An efficient algorithm that solves a relaxation of the above problem is presented next. It is demonstrated that the cluster centre estimates obtained using the proposed algorithm degrade gradually with an increase in the number of missing entries. The algorithm is also used to cluster the Wine dataset and reconstruct MRI images from under-sampled Fourier data. The presented theory and results demonstrate the utility of the proposed algorithm in clustering data when some of the feature values of the data are unknown.

\ifCLASSOPTIONcompsoc

\ifCLASSOPTIONcaptionsoff
  \newpage
\fi

\bibliographystyle{IEEEtran}
\bibliography{refs}

\begin{thebibliography}{10}
\providecommand{\url}[1]{#1}
\csname url@samestyle\endcsname
\providecommand{\newblock}{\relax}
\providecommand{\bibinfo}[2]{#2}
\providecommand{\BIBentrySTDinterwordspacing}{\spaceskip=0pt\relax}
\providecommand{\BIBentryALTinterwordstretchfactor}{4}
\providecommand{\BIBentryALTinterwordspacing}{\spaceskip=\fontdimen2\font plus
\BIBentryALTinterwordstretchfactor\fontdimen3\font minus
  \fontdimen4\font\relax}
\providecommand{\BIBforeignlanguage}[2]{{%
\expandafter\ifx\csname l@#1\endcsname\relax
\typeout{** WARNING: IEEEtran.bst: No hyphenation pattern has been}%
\typeout{** loaded for the language `#1'. Using the pattern for}%
\typeout{** the default language instead.}%
\else
\language=\csname l@#1\endcsname
\fi
#2}}
\providecommand{\BIBdecl}{\relax}
\BIBdecl

\bibitem{saxena2017review}
A.~Saxena, M.~Prasad, A.~Gupta, N.~Bharill, O.~P. Patel, A.~Tiwari, M.~J. Er,
  W.~Ding, and C.-T. Lin, ``A review of clustering techniques and
  developments,'' \emph{Neurocomputing}, 2017.

\bibitem{jain1999data}
A.~K. Jain, M.~N. Murty, and P.~J. Flynn, ``Data clustering: a review,''
  \emph{ACM computing surveys (CSUR)}, vol.~31, no.~3, pp. 264--323, 1999.

\bibitem{macqueen1967some}
J.~MacQueen \emph{et~al.}, ``Some methods for classification and analysis of
  multivariate observations,'' in \emph{Proceedings of the fifth Berkeley
  symposium on mathematical statistics and probability}, vol.~1, no.~14.\hskip
  1em plus 0.5em minus 0.4em\relax Oakland, CA, USA., 1967, pp. 281--297.

\bibitem{bradley1997clustering}
P.~S. Bradley, O.~L. Mangasarian, and W.~N. Street, ``Clustering via concave
  minimization,'' in \emph{Advances in neural information processing systems},
  1997, pp. 368--374.

\bibitem{ng2001spectral}
A.~Y. Ng, M.~I. Jordan, Y.~Weiss \emph{et~al.}, ``On spectral clustering:
  Analysis and an algorithm,'' in \emph{NIPS}, vol.~14, no.~2, 2001, pp.
  849--856.

\bibitem{awasthi2015relax}
P.~Awasthi, A.~S. Bandeira, M.~Charikar, R.~Krishnaswamy, S.~Villar, and
  R.~Ward, ``Relax, no need to round: Integrality of clustering formulations,''
  in \emph{Proceedings of the 2015 Conference on Innovations in Theoretical
  Computer Science}.\hskip 1em plus 0.5em minus 0.4em\relax ACM, 2015, pp.
  191--200.

\bibitem{ward1963hierarchical}
J.~H. Ward~Jr, ``Hierarchical grouping to optimize an objective function,''
  \emph{Journal of the American statistical association}, vol.~58, no. 301, pp.
  236--244, 1963.

\bibitem{hocking2011clusterpath}
T.~D. Hocking, A.~Joulin, F.~Bach, and J.-P. Vert, ``Clusterpath an algorithm
  for clustering using convex fusion penalties,'' in \emph{28th international
  conference on machine learning}, 2011, p.~1.

\bibitem{zhu2014convex}
C.~Zhu, H.~Xu, C.~Leng, and S.~Yan, ``Convex optimization procedure for
  clustering: Theoretical revisit,'' in \emph{Advances in Neural Information
  Processing Systems}, 2014, pp. 1619--1627.

\bibitem{de2015impact}
M.~C. De~Souto, P.~A. Jaskowiak, and I.~G. Costa, ``Impact of missing data
  imputation methods on gene expression clustering and classification,''
  \emph{BMC bioinformatics}, vol.~16, no.~1, p.~64, 2015.

\bibitem{bell2008bellkor}
R.~M. Bell, Y.~Koren, and C.~Volinsky, ``The bellkor 2008 solution to the
  netflix prize,'' \emph{Statistics Research Department at AT\&T Research},
  2008.

\bibitem{brick1996handling}
J.~M. Brick and G.~Kalton, ``Handling missing data in survey research,''
  \emph{Statistical methods in medical research}, vol.~5, no.~3, pp. 215--238,
  1996.

\bibitem{wagstaff2005making}
K.~L. Wagstaff and V.~G. Laidler, ``Making the most of missing values: Object
  clustering with partial data in astronomy,'' in \emph{Astronomical Data
  Analysis Software and Systems XIV}, vol. 347, 2005, p. 172.

\bibitem{dixon1979pattern}
J.~K. Dixon, ``Pattern recognition with partly missing data,'' \emph{IEEE
  Transactions on Systems, Man, and Cybernetics}, vol.~9, no.~10, pp. 617--621,
  1979.

\bibitem{chen2015convex}
G.~K. Chen, E.~C. Chi, J.~M.~O. Ranola, and K.~Lange, ``Convex clustering: An
  attractive alternative to hierarchical clustering,'' \emph{PLoS Comput Biol},
  vol.~11, no.~5, p. e1004228, 2015.

\bibitem{hathaway2001fuzzy}
R.~J. Hathaway and J.~C. Bezdek, ``Fuzzy c-means clustering of incomplete
  data,'' \emph{IEEE Transactions on Systems, Man, and Cybernetics, Part B
  (Cybernetics)}, vol.~31, no.~5, pp. 735--744, 2001.

\bibitem{sarkar2001fuzzy}
M.~Sarkar and T.-Y. Leong, ``Fuzzy k-means clustering with missing values.'' in
  \emph{Proceedings of the AMIA Symposium}.\hskip 1em plus 0.5em minus
  0.4em\relax American Medical Informatics Association, 2001, p. 588.

\bibitem{chi2016k}
J.~T. Chi, E.~C. Chi, and R.~G. Baraniuk, ``k-pod: A method for k-means
  clustering of missing data,'' \emph{The American Statistician}, vol.~70,
  no.~1, pp. 91--99, 2016.

\bibitem{hunt2003mixture}
L.~Hunt and M.~Jorgensen, ``Mixture model clustering for mixed data with
  missing information,'' \emph{Computational Statistics \& Data Analysis},
  vol.~41, no.~3, pp. 429--440, 2003.

\bibitem{lin2006fast}
T.~I. Lin, J.~C. Lee, and H.~J. Ho, ``On fast supervised learning for normal
  mixture models with missing information,'' \emph{Pattern Recognition},
  vol.~39, no.~6, pp. 1177--1187, 2006.

\bibitem{candes2009exact}
E.~J. Cand{\`e}s and B.~Recht, ``Exact matrix completion via convex
  optimization,'' \emph{Foundations of Computational mathematics}, vol.~9,
  no.~6, p. 717, 2009.

\bibitem{DBLP:journals/corr/abs-1112-5629}
\BIBentryALTinterwordspacing
B.~Eriksson, L.~Balzano, and R.~D. Nowak, ``High-rank matrix completion and
  subspace clustering with missing data,'' \emph{CoRR}, vol. abs/1112.5629,
  2011. [Online]. Available: \url{http://arxiv.org/abs/1112.5629}
\BIBentrySTDinterwordspacing

\bibitem{elhamifar2016high}
E.~Elhamifar, ``High-rank matrix completion and clustering under
  self-expressive models,'' in \emph{Advances in Neural Information Processing
  Systems}, 2016, pp. 73--81.

\bibitem{Lichman:2013}
\BIBentryALTinterwordspacing
M.~Lichman, ``{UCI} machine learning repository,'' 2013. [Online]. Available:
  \url{http://archive.ics.uci.edu/ml}
\BIBentrySTDinterwordspacing

\bibitem{chi2015splitting}
E.~C. Chi and K.~Lange, ``Splitting methods for convex clustering,''
  \emph{Journal of Computational and Graphical Statistics}, vol.~24, no.~4, pp.
  994--1013, 2015.

\bibitem{pan2013cluster}
W.~Pan, X.~Shen, and B.~Liu, ``Cluster analysis: unsupervised learning via
  supervised learning with a non-convex penalty.'' \emph{Journal of Machine
  Learning Research}, vol.~14, no.~1, pp. 1865--1889, 2013.

\bibitem{chartrand2008iteratively}
R.~Chartrand and W.~Yin, ``Iteratively reweighted algorithms for compressive
  sensing,'' in \emph{Acoustics, speech and signal processing, 2008. ICASSP
  2008. IEEE international conference on}.\hskip 1em plus 0.5em minus
  0.4em\relax IEEE, 2008, pp. 3869--3872.

\bibitem{matula1976largest}
D.~W. Matula, \emph{The largest clique size in a random graph}.\hskip 1em plus
  0.5em minus 0.4em\relax Department of Computer Science, Southern Methodist
  University, 1976.

\end{thebibliography}

\appendices

\section{Proof of Lemma \ref{cent}}
\label{appCent}
\begin{proof} 
	Since $\mathbf x_1 $ and $\mathbf x_2 $ are in the same cluster, $\|\mathbf x_1 - \mathbf x_2\|_{\infty} \leq \epsilon$. For all the points in this particular cluster, let the $p^{th}$ feature be bounded as: $f^p_{min} \leq \mathbf x(p) \leq  f^p_{max}$. Then we can construct a vector $\mathbf u$, such that $\mathbf u(p) = \frac{1}{2}(f^p_{min}+f^p_{max})$. Now, since $f^p_{max}-f^p_{min} \leq \epsilon$, the following condition will be satisfied for this particular choice of $\mathbf u$:
	\begin{eqnarray}
	\|\mathbf x_i-\mathbf u\|_{\infty} &\leq& {\frac{\epsilon}{2}}; ~ ~i=1,2
	\end{eqnarray}
	From this, it follows trivially that the following will also hold:
	\begin{eqnarray}
	\|\mathbf S_i\,(\mathbf x_i-\mathbf u)\|_{\infty} &\leq& {\frac{\epsilon}{2}}; ~ ~i=1,2
	\end{eqnarray}
\end{proof}

\section{Lemma \ref{partialDistBasic}}

\begin{lemma}
	\label{partialDistBasic}
	Consider any pair of points $\mathbf x_1, \mathbf x_2 \in \mathbb R^P$ observed by sampling matrices $\mathbf S_1 = \mathbf S_{\mathcal I_1}$ and $\mathbf S_2 = \mathbf S_{\mathcal I_2}$, respectively. We assume the set of common indices ($\omega \coloneqq \mathcal I_1 \cap \mathcal I_2$) to be of size $q = |\mathcal I_1 \cap \mathcal I_2|$. Then, for some $0< t < \frac{q}{P}$, the following result holds true regarding the partial distance $\|\mathbf y_\omega\|_2 = \|\mathbf S_{\mathcal I_1\cap \mathcal I_2}\left(\mathbf x_{1}-\mathbf x_{2}\right)\|_2$:
	\begin{equation}
	\mathbb P\left(\|\mathbf y_\omega\|_2^2 \leq \left(\frac{q}{P}-t\right)\|\mathbf y\|_2^2\right) \leq e^{-\frac{2t^2P^2}{q\mu_0^2}}
	\end{equation}
\end{lemma}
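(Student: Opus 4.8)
The plan is to recognize the squared partial distance as a sum over a uniformly random subset of the coordinates of $\mathbf y \coloneqq \mathbf x_1 - \mathbf x_2$, and then to invoke a Hoeffding-type concentration bound for sampling \emph{without} replacement. The coherence assumption enters precisely as a uniform bound on the magnitude of each summand, and it is what converts the generic range term in Hoeffding's inequality into the factor $\mu_0^2$ appearing in the exponent.

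First I would establish the distribution of $\omega$. Since every entry of $\mathbf X$ is observed independently with probability $p_0$, the index sets $\mathcal I_1$ and $\mathcal I_2$ are independent, and each coordinate $p$ lies in $\omega = \mathcal I_1 \cap \mathcal I_2$ independently with probability $p_0^2$. By the symmetry of the coordinates, conditioning on the event $|\omega| = q$ makes $\omega$ uniformly distributed over all $\binom{P}{q}$ subsets of $\{1,\dots,P\}$ of size $q$. Consequently, writing $a_p \coloneqq y_p^2$ for the squared coordinates of $\mathbf y$, the quantity $\|\mathbf y_\omega\|_2^2 = \sum_{p\in\omega} a_p$ is exactly the sum of a size-$q$ sample drawn without replacement from the finite population $\{a_1,\dots,a_P\}$.

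Next I would compute its mean and bound the range of the summands. Each coordinate belongs to $\omega$ with probability $q/P$, so $\mathbb E\big[\|\mathbf y_\omega\|_2^2\big] = \frac{q}{P}\sum_p a_p = \frac{q}{P}\|\mathbf y\|_2^2$, which is the centering term $\frac{q}{P}\|\mathbf y\|_2^2$ appearing in the claim. The coherence bound $\mu(\mathbf y)\le\mu_0$ (assumption A.3 for points from different clusters) rewrites as $\|\mathbf y\|_\infty^2 \le \frac{\mu_0}{P}\|\mathbf y\|_2^2$, which gives $0 \le a_p \le \frac{\mu_0}{P}\|\mathbf y\|_2^2$ for every $p$; thus the population values all lie in an interval of length at most $R = \frac{\mu_0}{P}\|\mathbf y\|_2^2$. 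I would then apply Hoeffding's inequality for sampling without replacement in its lower-tail form: for a sum $S$ of $q$ values drawn without replacement from a population contained in an interval of length $R$, one has $\mathbb P\big(S \le \mathbb E[S]-\tau\big) \le \exp\big(-2\tau^2/(qR^2)\big)$. Taking $\tau = t\|\mathbf y\|_2^2$, which is exactly the deviation corresponding to the threshold $(\tfrac{q}{P}-t)\|\mathbf y\|_2^2$, and substituting $R = \frac{\mu_0}{P}\|\mathbf y\|_2^2$, the $\|\mathbf y\|_2^4$ factors cancel and the exponent collapses to $-2t^2P^2/(q\mu_0^2)$, which is the claimed bound.

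The main obstacle is the dependence among the coordinate indicators: because $|\omega|=q$ is fixed, the events $\{p\in\omega\}$ are negatively associated rather than independent, so the vanilla Hoeffding inequality for independent variables does not apply verbatim. I would resolve this by appealing to the sampling-without-replacement version, for which Hoeffding's original comparison argument shows that such sums concentrate at least as sharply as their with-replacement counterparts, yielding the same sub-Gaussian tail constant. A secondary point to handle cleanly is the conditioning on $|\omega|=q$: since the bound is stated for a fixed $q$, the exchangeability argument of the second paragraph is exactly what licenses treating $\omega$ as a uniform size-$q$ sample and thus permits the concentration inequality to be applied.
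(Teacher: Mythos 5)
Your proposal is correct and follows essentially the same route as the paper's proof: both treat $\|\mathbf y_\omega\|_2^2$ as the sum of a uniformly random size-$q$ sample (without replacement) from $\{y_1^2,\dots,y_P^2\}$, use $\mathbb E[\|\mathbf y_\omega\|_2^2]=\frac{q}{P}\|\mathbf y\|_2^2$, bound the summand range by $\|\mathbf y\|_\infty^2\le\frac{\mu_0}{P}\|\mathbf y\|_2^2$ via coherence, and apply a Hoeffding/McDiarmid-type tail bound with deviation $t\|\mathbf y\|_2^2$ to obtain the exponent $-\frac{2t^2P^2}{q\mu_0^2}$. The only cosmetic difference is that the paper invokes McDiarmid's bounded-differences inequality (swapping one sampled coordinate changes the sum by at most $\|\mathbf y\|_\infty^2$) while you invoke Hoeffding's without-replacement inequality directly; these yield the identical bound, and your explicit handling of the dependence among the coordinate indicators is, if anything, slightly more careful than the paper's.
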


\begin{proof}
	We use some ideas for bounding partial distances from Lemma 3 of \cite{DBLP:journals/corr/abs-1112-5629}. We rewrite the partial distance $\|\mathbf y_\omega\|_2^2$ as the sum of $q$ variables drawn uniformly at random from $\{y_1^2, y_2^2, \ldots, y_P^2\}$. By replacing a particular variable in the summation by another one, the value of the sum changes by at most $\|\mathbf y\|_{\infty}^2$.
	Applying McDiarmid's Inequality, we get:
	\begin{equation}
	\label{mcdiar}
	\mathbb P\left(E(\|\mathbf y_{\omega}\|_2^2) - \|\mathbf y_\omega\|_2^2 \geq c\right) \leq  e^{-\frac{2c^2}{\sum_{i=1}^{q}\|\mathbf y\|_\infty^4}} =  e^{-\frac{2c^2}{q\|\mathbf y\| _{\infty}^4}}
	\end{equation}
	
	From our assumptions, we have $E(\|\mathbf y_{\omega}\|_2^2) = \frac{q}{P}\|\mathbf y\|_2^2$. We also have $\frac{\|\mathbf y\|_2^2}{\|\mathbf y\| _{\infty}^2} \geq \frac{P}{\mu_0}$ by \eqref{coherence}. We now substitute $c = t\|\mathbf y\|_2^2$, where $0 < t < \frac{q}{P}$. Using the results above, we simplify expression \eqref{mcdiar} as:
	
	\begin{equation}
	\begin {split}
	\mathbb P \left(\|\mathbf y_\omega\|_2^2 \leq \left(\frac{q}{P}-t\right)\|\mathbf y\|_2^2 \right) &\leq e^{-\frac{2t^2\|\mathbf y\|_2^4}{q\|\mathbf y\| _{\infty}^4}}\\
	&\leq e^{-\frac{2t^2P^2}{q\mu_0^2}}\\
	\end {split}
	\end{equation}
	
\end{proof}

\section{Proof of Lemma \ref{partDist}}
\label{lemma2.2proof}
\begin{proof}
	We will use proof by contradiction. Specifically, we consider two points $\mathbf x_1$ and $\mathbf x_2$ belonging to different clusters and assume that there exists a point $\mathbf u$ that satisfies:
	\begin{eqnarray}
	\label{dataconsistencyappendix}
	\|\mathbf S_i\,(\mathbf x_i-\mathbf u)\|_{\infty} &\leq& {\frac{\epsilon}{2}}; i=1,2
	\end{eqnarray}
	We now show that the above assumption is violated with high probability. Following the notation of Lemma \ref{partialDistBasic}, we denote the difference between the vectors by $\mathbf y=\mathbf x_1-\mathbf x_2$ and the partial distances by:
	\begin{equation}
	\|\mathbf y_\omega\|_2 = \|\mathbf S_{\mathcal I_1 \cap \mathcal I_2}~\left(\mathbf x_{1}-\mathbf x_{2}\right)\|_2
	\end{equation}
	Using \eqref{dataconsistencyappendix} and applying triangle inequality, we obtain $
	\|\mathbf y_{\omega}\|_{\infty} \leq {\epsilon}$, which translates to $\|\mathbf y_{\omega}\|_{2} \leq \epsilon\sqrt q$, where $q = |\mathcal I_1 \cap \mathcal I_2|$ is the number of commonly observed locations. We need to show that with high probability, the partial distances satisfy:
	\begin{equation}
	\label{bound}
	\|\mathbf y_{\omega}\|_2^2 > \epsilon^2q
	\end{equation}
	which will contradict \eqref{dataconsistencyappendix}. We first focus on finding a lower bound for $q$. Using the Chernoff bound and setting $\mathbb E(q) = p_0^2\,P$, we have:
	\begin{equation}
	\label{numberofsamples}
	\mathbb P\left(q \geq \frac{p_0^2P}{2}\right) > 1-\gamma_0
	\end{equation}
	where $\gamma_0 = (\frac{e}{2})^{-\frac{p_0^2P}{2}}$. Thus, we can assume that $q \geq \frac{p_0^2P}{2}$ with high probability.
	
	Using Lemma \ref{partialDistBasic}, we have the following result for the partial distances:
	\begin{equation}
	\mathbb P\left(\|\mathbf y_\omega\|_2^2 \leq \left(\frac{q}{P}-t\right)\|\mathbf y\|_2^2\right) \leq e^{-\frac{2t^2P^2}{q\mu_0^2}}
	\end{equation}
	Since $\mathbf x_1$ and $\mathbf x_2$ are in different clusters, we have $\|\mathbf y\|_2 \geq \delta$. We will now determine the value of $t$ for which the above upper bound will equal the RHS of \eqref{bound}:
	\begin{equation}
	\left(\frac{q}{P}-t\right)\|\mathbf y\|_2^2 = \epsilon^2q
	\end{equation}
	or equivalently:
	\begin{equation}
	t=\frac{q}{P}-\frac{\epsilon^2q}{\|\mathbf y\|_2^2} \geq \frac{q}{P}-\frac{\epsilon^2q}{\delta^2} = \frac{q}{P}(1-\kappa^2)
	\end{equation}
	Since $t > 0$, we require $\kappa < 1$, where $\kappa= \frac{\epsilon\sqrt{P}}{\delta}$. Using the above, we get the following bound if we assume that $q \geq \frac{p_0^2P}{2}$:
	\begin{equation}
	\frac{t^2}{q} \geq \frac{q}{P^2}(1-\kappa^2)^2 \geq \frac{p_0^2}{2P}(1-\kappa^2)^2
	\end{equation}
	We now obtain the following probability bound for any $q \geq \frac{p_0^2P}{2}$:  
	\begin{equation}
	\label{highpartial}
	\begin{split}
	\mathbb P\left(\|\mathbf y_\omega\|^2 > \epsilon^2q \right) & \geq  1-e^{-\frac{2t^2P^2}{q\mu_0^2}}\\ & \geq 1-e^{-\frac{p_0^2P(1-\kappa^2)^2}{\mu_0^2}}\\ & = 1- \delta_0
	\end{split}
	\end{equation}
	Combining \eqref{numberofsamples} and \eqref{highpartial}, the probability for \eqref{dataconsistencyappendix} to hold is $\leq 1 - (1-\gamma_0)(1-\delta_0) = \beta_0$.
	
\end{proof}

\section{Proof of Lemma \ref{smallClusters}}
\label{appSC}
\begin{proof}
	
	We construct a graph where each point $\mathbf x_i$ is represented by a node. Lemma \ref{cent} implies that a pair of points belonging to the same cluster can yield the same $\mathbf u$ in a feasible solution with probability 1. Hence, we will assume that there exists an edge between two nodes from the same cluster with probability 1. Lemma \ref{partDist} indicates that a pair of points belonging to different clusters can yield the same $\mathbf u$ in a feasible solution with a low probability of $\beta_0$. We will assume that there exists an edge between two nodes from different clusters with probability $\beta_0$. We will now evaluate the probability that there exists a fully-connected sub-graph of size $M$, where all the nodes have not been taken from the same cluster. We will follow a methodology similar to \cite{matula1976largest}, which gives an expression for the probability distribution of the maximal clique (i.e. largest fully connected sub-graph) size in a random graph. Unlike the proof in \cite{matula1976largest}, in our graph every edge is not present with equal probability. 
	
	We define the following random variables:
	\begin{itemize}
		\item $t \coloneqq$ Size of the largest fully connected sub-graph containing nodes from more than $1$ cluster
		\item $n \coloneqq$ Number of $M$ membered complete sub-graphs containing nodes from more than $1$ cluster
	\end{itemize}
	Our graph can have an $M$ membered clique iff $n$ is non-zero. Thus, we have:
	\begin{equation}
	\mathbb P \left(t \geq M \right) = \mathbb P \left(n \neq 0 \right)
	\end{equation}
	Since the distribution of $n$ is restricted only to the non-negative integers, it can be seen that:
	\begin{equation}
	\mathbb P \left(n\neq 0\right) \leq E(n)
	\end{equation}
	Combining the above 2 results, we get:
	\begin{equation}
	\mathbb P \left(t \geq M \right) \leq E(n)
	\end{equation}
	
	Let us consider the formation of a particular clique of size $M$ using $m_1, m_2, \ldots, m_K$ nodes from clusters $C_1, C_2, \ldots, C_K$ respectively such that $\sum_{j=1}^{K}m_j = M$, and at least $2$ of the variables $\{m_j\}$ are non-zero. The number of ways to choose such a collection of nodes is: $\prod_j {M \choose m_j}$. In order to form a solution $\{m_j\}$, we need $\frac{1}{2}(M^2-\sum_j {m_j^2})$ inter-cluster edges to be present. We recall that each of these edges is present with probability $\beta_0$. Thus, the probability that such a collection of nodes forms a clique is $\beta_0^{\frac{1}{2}(M^2-\sum_j {m_j^2})}$. This gives the following result:
	\begin{equation}
	E(N) = \sum_{\{m_j\} \in \mathcal S} \beta_0^{\frac{1}{2}(M^2-\sum_j {m_j^2})} \prod_j {M \choose m_j} = \eta_0
	\end{equation}
	where $\mathcal S$ is the set of all sets of positive integers $\{m_j\}$ such that: $2 \leq \mathcal U(\{m_j\}) \leq K$ and $\sum_j m_j = M$. Here, the function $\mathcal U$ counts the number of non-zero elements in a set. Thus, we have:
	\begin{equation}
	\mathbb P \left(t\geq M \right) \leq \eta_0
	\end{equation}
	This proves that with probability $\geq 1-\eta_0$, a set of points of cardinality $\geq M$ not all belonging to the same cluster cannot all have equal cluster-centre estimates.
	
\end{proof}

\section{Proof of Theorem \ref{mainResult}}
\label{appMR}
\begin{proof}
	Lemma \ref{cent} indicates that fully connected original clusters with size $M$ are likely with probability 1, while Lemma  \ref{smallClusters} shows that the size of misclassified large clusters cannot exceed $M-1$ with very high probability. These results enable us to re-express the optimization problem \eqref{l0prob} as a simpler maximization problem. We will then show that with high probability, any feasible solution other than the ground-truth solution results in a cost higher than the ground-truth solution.
	
	Let a candidate solution have $k$ groups of sizes $M_1, M_2,\ldots, M_k$ respectively. The centre estimates for all points within a group are equal. These are different from the centre estimates of other groups. Without loss of generality, we will assume that at most $K$ of these groups each have points belonging to only a single ground-truth cluster, i.e. they are "pure". The rest of the clusters in the candidate solution are "mixed" clusters. If we have a candidate solution with greater than $K$ pure clusters, then they can always be merged to form $K$ pure clusters; the merged solution will always result in a lower cost. 
	
	The objective function in \eqref{l0prob} can thus be rewritten as:
	\begin{equation}
	\begin{split}
	\sum_{i=1}^{KM}\sum_{j=1}^{KM}\|\mathbf u_i - \mathbf u_j\|_{2,0} 
	& = \sum_{i=1}^{k} M_i (KM-M_i) \\
	& = K^2M^2 - \sum_{i=1}^{k} M_i^2 
	\end{split}
	\end{equation}
	Since we assume that the first $K$ clusters are pure, therefore they have a size $0 \leq M_i \leq M$, $i=1,\ldots, K$. The remaining clusters are mixed and have size $\leq M-1$ with probability $\geq 1-\eta_0$. Hence, we have the constraints $0 \leq M_i \leq (M-1)$, $i=K+1,\ldots, k$. We also have a constraint on the total number of points, i.e. $\sum_{i=1}^k M_i = KM$. Thus, the problem \eqref{l0prob} can be rewritten as the constrained optimization problem:
	\begin{equation}
	\begin{split}
	\{M_i^*,k^*\} = & \max_{\{M_i\},k}\sum_{i=1}^{k} M_i^2 \\
	\mbox{ s.t. } & 0 \leq M_i \leq M, i=1,\ldots, K \\
	& 0 \leq M_i \leq M-1, i=K+1,\ldots,k \\
	& \sum_{i=1}^k M_i = KM
	\end{split}
	\end{equation}
	Note that we cannot have $k < K$, with probability $\geq 1-\eta_0$, since that involves a solution with cluster size $> M$. We can evaluate the best solution $\{M_i^*\}$ for each possible value of $k$ in the range $K \leq k \leq MK$. Then we can compare these solutions to get the solution with the highest cost. We note that the feasible region is a polyhedron and the objective function is convex. Thus, for each value of $k$, we only need to check the cost at the vertices of the polyhedron formed by the constraints, since the cost at all other points in the feasible region will be lower. The vertex points are formed by picking $k-1$ out of the $k$ box constraints and setting $M_i$ to be equal to one of the 2 possible extremal values. We note that all the vertex points have either $K$ or $K+1$ non-zero values. As a simple example, if we choose $M=10$ and $K=4$, then the vertex points of the polyhedron (corresponding to different solutions $\{M_i\})$ are given by all possible permutations of the following:
	
	\begin{itemize}
		\item $(10,10,10,10,0,0 \ldots 0)$ : 4 clusters
		\item $(10,10,10,0,1,9,0 \ldots 0)$: 5 clusters
		\item $(10,10,0,0,2,9,9,0 \ldots 0)$: 5 clusters
		\item $(10,0,0,0,3,9,9,9,0 \ldots 0)$: 5 clusters
		\item $(0,0,0,0,4,9,9,9,9,0 \ldots 0)$: 5 clusters
	\end{itemize} 
	In the general case the vertices are given by permutations of the following:
	\begin{itemize}
		\item $(M,M,\ldots,M,0,0 \ldots 0)$: $K$ clusters
		\item $(M,M,\ldots,0,0,1,M-1,0 \ldots 0)$: $K+1$ clusters
		\item $(M,M,\ldots,0,0,2,M-1,M-1 \ldots 0)$: $K+1$ clusters
		\item \ldots
		\item $(0,0, \ldots 0,K,M-1,M-1 \ldots M-1, 0)$: $K+1$ clusters
	\end{itemize} 
	
	Now, it is easily checked that the $1^{st}$ candidate solution in the list (which is also the ground-truth solution) has the maximum cost. Mixed clusters with size $> M-1$ cannot be formed with probability $> 1 - \eta_0$. Thus, with the same probability, the solution to the optimization problem \eqref{l0prob} is identical to the ground-truth clustering. This concludes the proof of the theorem.
	
\end{proof}

\section{Upper Bound for $\eta_0$ in the 2-cluster case}
\label{appUB}
\begin{proof}
	
	We introduce the following notation: 
	\begin{enumerate}
		\item $F(i) = i(M-i)\log \beta_0$, for $i \in [1, M-1]$.
		\item $G(i) = 2[\log \Gamma (M+1) - \log \Gamma(i+1) - \log \Gamma(M-i+1)]$, for $i \in [1, M-1]$ where $\Gamma$ is the Gamma function.
	\end{enumerate}
	We note that both the functions $F$ and $G$ are symmetric about $i = \frac{M}{2}$, and have unique minimum and maximum respectively for $i = \frac{M}{2}$. We will show that the maximum for the function $F + G$ is achieved at the points $i=1,M-1$. We note that:
	\begin{equation}
	G'(i) = -2[\Psi(i+1) - \Psi(M-i+1)]
	\end{equation}
	where $\Psi$ is the digamma function, defined as the log derivative of the $\Gamma$ function. We now use the expansion:
	\begin{equation}
	\Psi(i+1) = \log i + \frac{1}{2i}
	\end{equation}
	Substituting, we get:
	\begin{equation}
	G'(i) = - 2\left[\log \frac{i}{M-i} + \frac{M-2i}{2i(M-i)}\right]
	\end{equation}
	We also have:
	\begin{equation}
	F'(i) = (M-2i)\log \beta_0
	\end{equation}
	Adding, we get:
	\begin{equation}
	\begin{split}
	F'(i) + G'(i) = (M-2i) (&\log \beta_0 - \frac{1}{i(M-i)})\\& -2\log\frac{i}{(M-i)})
	\end{split}
	\end{equation}
	Now, in order to ensure that $F'(i) + G'(i) \leq 0$, we have to arrive at conditions such that:
	\begin{equation}
	\log \beta_0 \leq \frac{1}{i(M-i)} + \frac{2}{M-2i}\log \frac{i}{M-i}
	\end{equation}
	Since the RHS is monotonically increasing in the interval $i \in [1, \frac{M}{2}-1]$ the above condition reduces to:
	\begin{equation}
	\log \beta_0 \leq \frac{1}{M-1} + \frac{2}{M-2}\log \frac{1}{M-1}
	\end{equation}
	Under the above condition, for all $i \in [1, \frac{M}{2}]$ :
	\begin{equation}
	F'(i) + G'(i) \leq 0
	\end{equation}
	Thus, the function $F + G$ reaches its maxima at the extremal points given by $i=1,M-1$. 
	For positive integer values of $i$, i.e. $i \in \{1, 2, \ldots, M-1\}$:
	\begin{equation}
	F(i) + G(i) = \log[\beta_0^{i(M-i)} {M \choose i}^2]
	\end{equation}
	Thus, the function $\beta_0^{i(M-i)} {M \choose i}^2$ also reaches its maxima at $i=1,M-1$. This maximum value is given by: $\beta_0^{M-1}M^2$. This gives the following upper bound for $\eta_0$:
	\begin{equation}
	\begin{split}
	\eta_0 & \leq \sum_{i=1}^{M-1} [\beta_0^{M-1} M^2] \\ & = M^2(M-1)\beta_0^{M-1} \\ & \leq M^3 \beta_0^{M-1} \\ & = \eta_{0,{\rm approx}}
	\end{split}
	\end{equation}
	
\end{proof}

\section{Proof of Theorem \ref{noMissingFinal}}
\label{appNM}
\begin{proof}
	We consider any two points $\mathbf x_1$ and $\mathbf x_2$ that are in different clusters. Let us assume that there exists some $\mathbf u$ satisfying  the data consistency constraint:
	\begin{equation}
	\|\mathbf x_i - \mathbf u\|_{\infty} \leq \epsilon/2, ~~i=1,2.
	\end{equation}
	Using the triangle inequality, we have $\|\mathbf x_1 - \mathbf x_2\|_{\infty} \leq \epsilon$ and consequently, $\|\mathbf x_1 - \mathbf x_2\|_{2} \leq \epsilon\sqrt P$. However, if we have a large inter-cluster separation $\delta > \epsilon\sqrt P$, then this is not possible.
	
	Thus, if $\delta > \epsilon\sqrt P$, then points in different clusters cannot be misclassified to a single cluster. Among all feasible solutions, clearly the solution to problem \eqref{l0probFull} with the minimum cost is the one where all points in the same cluster merge to the same $\mathbf u$. Thus, $\kappa < 1$ ensures that we will have the correct clustering.
\end{proof}

\section{Proof of Lemma \ref{noMissingUniform}}
\label{appNMU}
\begin{proof}
	
	The idea is similar to that in Theorem \ref{noMissingFinal}. We will show that with high probability two points $\mathbf x_1$ and $\mathbf x_2$ that are in different clusters satisfy $\|\mathbf x_1 - \mathbf x_2\|_2 > \epsilon \sqrt{P}$ with high probability, which implies that \eqref{dataconsistencynoise} is violated.
	
	Let points in $C_1$ and $C_2$ follow uniform random distributions in $\mathbb R^P$ with centres $\mathbf c_1$ and $\mathbf c_2$ respectively. The expected distance between $\mathbf x_1 \in \mathcal C_1$ and $\mathbf x_2 \in \mathcal C_2$ is given by:
	
	\begin{equation}
	\begin{split}
	E(\|\mathbf x_1 - \mathbf x_2\|_2^2) &= \frac{1}{\epsilon^{2}}\sum_{p=1}^P\int_{\mathbf c_1^p-\frac{\epsilon}{2}}^{\mathbf c_1^p+\frac{\epsilon}{2}}\int_{\mathbf c_2^p-\frac{\epsilon}{2}}^{\mathbf c_2^p+\frac{\epsilon}{2}}(\mathbf x_1^p - \mathbf x_2^p )^2 d\mathbf x_1^p d\mathbf x_2^p \\ &= \|\mathbf c_1 - \mathbf c_2\|_2^2 + \frac{P}{6}\epsilon^2\\
	&= c_{12}^2 + \frac{P}{6}\epsilon^2
	\end{split}
	\end{equation} 
	where $\mathbf c_i^p$ and $\mathbf x_i^p$ are the $p^{th}$ features of $\mathbf c_i$ and $\mathbf x_i$ respectively, and $c_12 = \|\mathbf c_1 - \mathbf c_2\|_2$.
	Let $c_i = |\mathbf c_1^i - \mathbf c_2^i|$, for $i = 1,2,\ldots,P$. Using Mcdiarmid's inequality:
	\begin{equation}
	\begin{split}
	& \mathbb P \left(\|\mathbf x_1 - \mathbf x_2\|_2^2 \leq E(\|\mathbf x_1 - \mathbf x_2\|_2^2) - t \right) \\ & \leq e^{-\frac{2t^2}{\sum_{i=1}^P|(c_i+\epsilon)^2-(c_i-\epsilon)^2|^2}} \\ & = e^{-\frac{t^2}{8\epsilon^2c_{12}^2}}
	\end{split}
	\end{equation}
	Let $t = E(\|\mathbf x_1 - \mathbf x_2\|_2^2) - P\epsilon^2$. Then we have:
	\begin{equation}
	\mathbb P \left(\|\mathbf x_1 - \mathbf x_2\|_2 \leq \epsilon \sqrt{P} \right) \leq e^{-\frac{(c_{12}^2-\frac{5P}{6}\epsilon^2)^2}{8\epsilon^2c_{12}^2}}
	\end{equation}
	We note that the RHS above is a decreasing function of $c_{12}$. Thus, we consider some $c \leq c_{12}$, such that $c$ is the minimum distance between any $2$ cluster centres in the dataset. We then have the following bound:
	\begin{equation}
	\mathbb P \left(\|\mathbf x_1 - \mathbf x_2\|_2 \leq \epsilon \sqrt{P} \right) \leq e^{-\frac{(c^2-\frac{5P}{6}\epsilon^2)^2}{8\epsilon^2c^2}}
	\end{equation}
	To ensure $t>0$, we require: $c > \sqrt{\frac{5P}{6}}\epsilon$, or equivalently, $\kappa' = \frac{\epsilon \sqrt{P}}{c} < \sqrt{\frac{6}{5}}$.
	
	We now get the probability bound:
	\begin{equation}
	\mathbb P \left(\|\mathbf x_1 - \mathbf x_2\|_2 \leq \epsilon \sqrt{P}\right) \leq e^{-\frac{P(1-\frac{5}{6}\kappa'^{2})^2}{8 \kappa'^{2}}} = \beta_1
	\end{equation}
	Thus, \eqref{dataconsistencynoise} is violated with probability exceeding $1-\beta_1$.
\end{proof}

\end{document}